\newtheorem{thm}{Theorem}
\newtheorem{prob}{Problem}
\newtheorem{asm}{Assumption}
\newtheorem{rem}{Remark}
\title{Biologically Inspired Collision Avoidance Without Distance Information}
\author{Thiago~Marinho $^{1}$,
        Massi~Amrouche $^{2}$,
        Du{\v s}an~Stipanovi\'{c} $^{2}$,
        Venanzio Cichella $^{3}$,
        and~Naira~Hovakimyan $^{1}$
\thanks{$^{1}$ Thiago Marinho, and Naira Hovakimyan are with the Mechanical Engineering Department and the Coordinated Science Laboratory, University of Illinois at Urbana-Champaign, Urbana, IL 61801 USA, email:{ \tt \small \{marinho, nhovakim\}@illinois.edu}}
\thanks{$^{2}$  Massi Amrouche and Du{\v s}an Stipanovi\'{c} are with the Department of Industrial and Enterprise Systems Engineering  and the Coordinated Science Laboratory, University of Illinois at Urbana-Champaign, Urbana, IL 61801 USA email:{ \tt \small \{amrouch2,dusan\}@illinois.edu} }
\thanks{$^{3}$  Venanzio Cichella is with the Department of Mechanical Engineering, The University of Iowa, Iowa City, IA 52242 USA email:{ \tt \small venanzio-cichella@uiowa.edu} }}
\begin{document}
\maketitle


\begin{abstract}

Biological evidence shows that animals are capable of evading eminent collision without using depth information, relying solely on \textit{looming} stimuli. In robotics, collision avoidance among uncooperative vehicles requires measurement of relative distance to the obstacle.  Small, low-cost mobile robots and UAVs might be unable to carry distance measuring sensors, like LIDARS and depth cameras.  We propose a control framework suitable for a unicycle-like vehicle moving in a 2D plane that achieves collision avoidance. The control strategy is inspired by the reaction of invertebrates to approaching obstacles, relying exclusively on line-of-sight (LOS) angle, LOS angle rate, and time-to-collision as feedback. Those quantities can readily be estimated from a monocular camera vision system onboard a mobile robot. The proposed avoidance law commands the heading angle to circumvent a moving obstacle with unknown position, while the velocity controller is left as a degree of freedom to accomplish other mission objectives. Theoretical guarantees are provided to show that minimum separation between the vehicle and the obstacle is attained regardless of the exogenous tracking controller.
\end{abstract}

\section{Introduction}
Autonomous vehicles, both ground and aerial, are quickly gaining popularity providing many benefits to society. Impacts are already being seen in urban mobility, aerial inspection, precision agriculture, surveillance, and healthcare.  As an example, an autonomous drone is envisioned to drastically improve the delivery bandwidth in the last mile problem scenario, which is an active  field of research. When these vehicles are small enough, they can be designed to fly indoors to help individuals with limited mobility such as in elderly care or in-home rehabilitation \cite{marinho2016carebots}.

To achieve a high level of autonomy, vehicles are expected to carry a reliable onboard collision avoidance system (CAS).  The safety of autonomous robots depends on the CAS's ability to deal with unpredicted events. As an example, a self-driving car cruising on a highway must be prepared to avert an imminent collision with a cyclist that suddenly comes onto a collision course. Reactive components are of paramount importance for safety towards addressing the challenges with imminent and unplanned spatial deconfliction needs.

When identifying obstacles and collision threats, autonomous vehicles and larger drones rely on expensive and sometimes cumbersome sensing equipment \cite{spriesterbach2013unmanned} and  \cite{wolcott2014visual}. Self-driving cars can be equipped with RADARs, LIDARs, and multiple cameras. Skydio's self-flying camera drone relies on more than ten cameras to construct a voxel map of the world around it. Smaller vehicles that are limited to lightweight and affordable solutions, usually like monocular cameras, are unable to carry sensors that can measure the distance to a moving obstacle. This restriction directly limits the obstacle avoidance capabilities, which are largely dependent on the amount of information made available by the sensors.

The performance commonly associated with the safety of successful autonomous systems relies on the measurement or direct estimation of the distance to the moving obstacles. 
However, it is well known that low-cost monocular cameras are not capable of estimating the relative position to a moving obstacle~\cite{alenya2009}. Many efforts have been made to acquire relative distance using RGB-D  cameras~\cite{Bachrach2012-bg} and stereo cameras~\cite{Hrabar2008-kz,Barry2015-xs}, but these technologies only provide reliable measurement in short range.  For unknown environments, simultaneous localization and mapping (SLAM) uses scanning with onboard LIDARs to identify the location of obstacles indoors~\cite{KumarICRA011}. Additionally, \cite{achtelik2011} shows that affordable cameras are suitable for navigation in unknown spaces, although these methods only perform environment mapping, not obstacle tracking. 

The collision avoidance problem in the absence of position information of the obstacle is underrepresented in the literature. Examples are \cite{voos2007uav,degen2011reactive,sharma2012reactive} which show that a collision can be avoided using image-based features (such as image area expansion, relative bearing rate), from which it is possible to estimate the range to the UAV from obstacle \cite{voos2007uav}, or the time-to-collision \cite{degen2011reactive}.  However, these solutions validate the collision avoidance algorithms through experimental results only.

For a class of smaller vehicles, it is  important to investigate a collision deconfliction solution that does not rely on distance measurement.  In this context, this paper proposes a collision avoidance solution when range information cannot be measured or estimated through the available sensors. We propose an output feedback control framework inspired by biology of visual guidance \cite{gibson2014ecological}, where stimuli such as \textit{loom} and \textit{time-to-collision} are used to drive the avoidance behavior. Additionally, under realistic assumptions, we provide safety guarantees for this autonomous navigation framework.

This work follows from recent efforts to provide theoretical results for collision avoidance without distance measurement \cite{cichella2015collision} and \cite{marinho2018guaranteed}.

\section{Biological Inspiration}
Most animals can achieve their most basic tasks like navigate in an environment, escape from a potential predator, or avoid an imminent collision, using only visual information as feedback. Although, wide range of species have access to the depth perception provided by the binocular information, it is fascinating to note that most of invertebrate motions and evasion maneuvers are solely based on  monocular stimulus \cite{dill1974escape, gibson2014ecological}.

The expansion of the retinal image is the critical component of deciding if an object is on a direct collision course with an observer \cite{gibson2014ecological}.  The size of the object on the retina is encoded in the angle $\theta$ defined in the simplified eye model in Figure \ref{fig:eye_model}.  In some sense, $\theta$ represents the object's size, while $\dot \theta$ - the expansion rate of the object on the retina. Previous studies \cite{regan1993dissociation} have demonstrated empirically that the quantity that humans particularly react to is neither  $\theta$ nor  $\dot \theta$ alone, but to the ratio $\frac{\theta}{\dot \theta}$. This quantity provides the basis for collision avoidance when driving on the motorway and during ball hitting \cite{regan1993dissociation}.

\begin{figure}[h!]
\centering
\includegraphics[width = 0.55\linewidth]{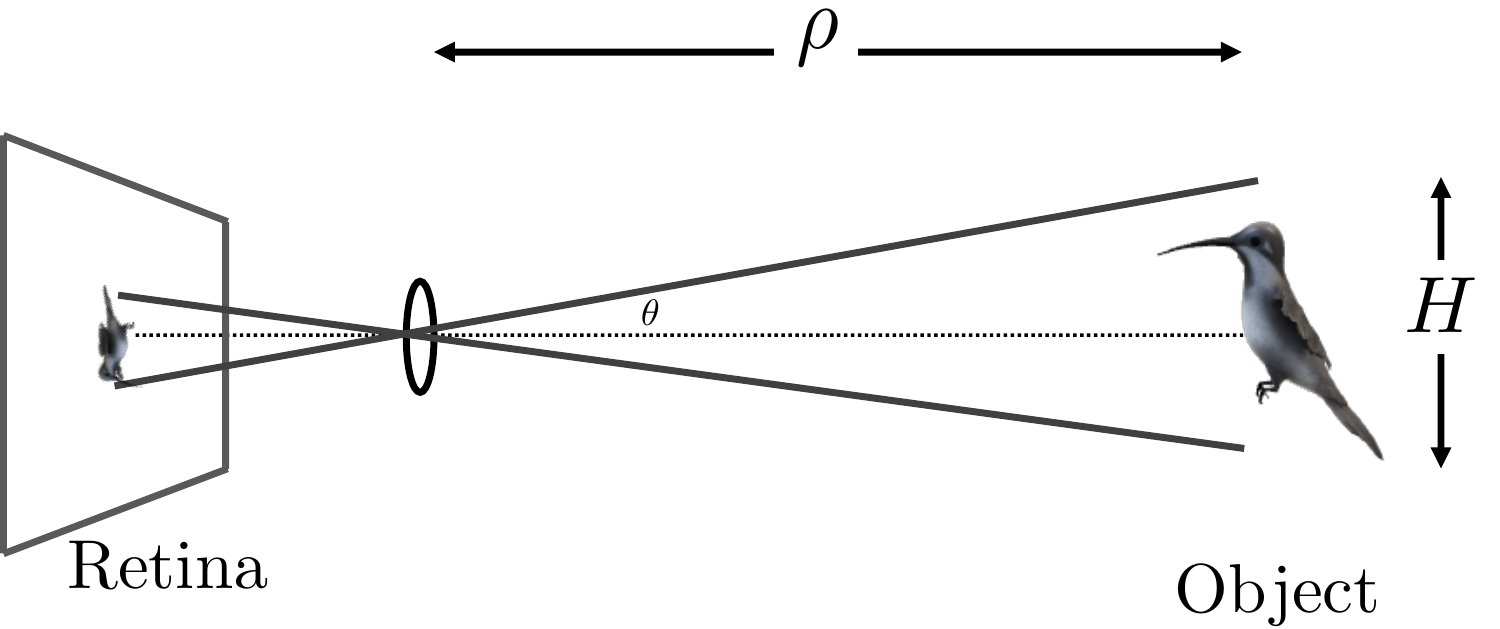}
\caption{Geometry of an object projected on the retina.}
\label{fig:eye_model}
\end{figure}

It can be shown that for smaller values of $\theta$, the relationship $\frac{\theta}{\dot \theta}$ is an approximation of $\frac{\rho}{\dot \rho}$ which is the definition of time-to-collision. To understand how time-to-collision can be measured from monocular cameras the reader is directed towards previous work from the authors \cite{marinho2018guaranteed}. Inspired by the work in \cite{moshtagh2009vision}, the analysis and control carried in this paper will be expressed as the loom $l: \mathbb{R} \times  \mathbb{R} \rightarrow \mathbb{R}_0^{-}$:

\begin{equation}
\label{eq:loom}
l\left(\rho(t), \dot \rho (t)\right) = \min\left\{\frac{\dot\rho(t)}{\rho(t)}, 0\right\}.
\end{equation}

\begin{rem}
Loom is the negative inverse of the time-to-collision $\tau$, and is the mathematical quantification of the looming stimuli observed in animals  \cite{joarder1992autonomous}. Larger absolute values of loom directly imply danger, and small absolute values of loom suggest the obstacle is not an imminent threat.
\end{rem}

\section{Problem Formulation}

The problem setting takes place in the $\mathbb{R}^2$-Euclidean space. Two-dimensional collision avoidance is the chosen formulation because it is employable in a three-dimensional navigation problem, as long as the avoidance maneuvers are performed in any chosen plane.

The central assumption to this body of work is that monocular camera-based vision systems are not suitable to accurately measure the distance and relative velocity with respect  to objects. Therefore, we consider that the measurements of distance and relative speed to the obstacle are unavailable. Our proposed strategy must guarantee collision avoidance in such a framework.


Define the position of the evading vehicle at time $t$ as $\vec{p}_\text{r}(t) = [x_\text{r}(t) \, , \, y_\text{r}(t)]^\top$.
Then, let the motion of the vehicle be driven by the unicycle like model:
\begin{equation} \label{eq:dynUAV}
\begin{cases}
\dot x_\mathrm{r}(t) = V_\mathrm{r}(t) \cos \psi_\mathrm{r}(t) \, , \quad & x_\mathrm{r,0} = x_\mathrm{r}(0) \\
\dot y_\mathrm{r}(t) = V_\mathrm{r}(t) \sin \psi_\mathrm{r} (t)\, , \quad & y_\mathrm{r,0} = y_\mathrm{r}(0)
\end{cases}, \quad
\end{equation}
where $V_\mathrm{r}(t)$ and $\psi_\mathrm{r} (t)$ are the speed and heading angle of the vehicle, respectively, and $u(t) = \dot \psi_\mathrm{r}(t)$ is the controlled angular rate. The proposed algorithm is suited for any unicycle model, which represents a large class of vehicles, such as constant altitude fixed wing UAVs and differential drive ground robots. The collision avoidance component of the control is added to the heading tracking controller angle as introduced in~\cite{cichella2015collision, marinho2018guaranteed}. The structure of the heading control law is defined as:

\begin{equation}
\label{eq:decouple}
   \dot \psi_{\mathrm{r}}(t) = \dot\psi_{\mathrm{tr}}(t) + \dot\psi_{\rm{ca}}(t),
\end{equation}
 where $u_\mathrm{ca}(t) = \dot\psi_{\rm{ca}}(t)$ is the avoidance component.
\begin{rem}[Decoupling of heading angle and speed]
The avoidance is performed solely by modifying the heading angle rate. We can assume that  $V_\mathrm{r}(t)$ and $u_\mathrm{tr} = \dot \psi_{\rm{tr}}(t)$ are given by an exogenous control law responsible for the robot's mission, such as a trajectory tracking, flocking/formation control, way point navigation or to satisfy temporal constraints.
\end{rem}

%
%
The geometry of the considered collision avoidance problem is shown in Figure \ref{fig:geom}. The heading angle $\psi_\mathrm{r}$ of the robot is defined with respect to the horizontal component of an inertial frame. Similarly, $\lambda$ is the angle of the line-of-sight with respect to the inertial frame. The distance between the robot and the obstacle is defined as $\rho$.
%
Furthermore, let $\vec{V}_\mathrm{r}$ and $\vec{V}_\mathrm{o}$ denote the robot's and obstacle's velocity on the avoidance plane. Before providing a formal statement of the problem at hand, we formulate a set of assumptions that the vehicle and obstacle must satisfy.

Similar to the evading vehicle, the obstacle's dynamics are  given by a unicycle model $
\dot x_\text{o}(t) = V_\text{o}(t) \cos \psi_\text{o}(t)$,
$\dot y_\text{o}(t) = V_\text{o}(t) \sin \psi_\text{o}(t),$ where $V_\mathrm{o}(t)$ and $\psi_\mathrm{o}(t)$ are the unknown speed and heading angle of the obstacle.
\begin{asm}
\label{asm:vel}
The vehicle and the obstacle velocities are bounded and there exist known constants $V_\mathrm{r, max} > V_\mathrm{r, min} > 0$, $V_\mathrm{o, max}>0$ and $\dot \psi_\mathrm{o, max}>0$, such that
\begin{align}
V_\mathrm{r}(t) \in  [V_\mathrm{r, min}, \; V_\mathrm{r, max}], \quad \forall t \geq 0,
\label{eq:boundvehiclespeed}
\end{align}
and
\begin{align}
0 &\leq V_\mathrm{o} (t)\leq V_\mathrm{o, max} \quad \text{ and } \quad \vert \dot \psi_\mathrm{o}(t)\vert \leq  \dot \psi_\mathrm{o, max}.
\label{eq:boundobstacle}  \end{align}

\end{asm}

The available measurements are line-of-sight angle (also called bearing angle), line-of-sight angle rate (optical flow) and time-to-collision. This sensing formulation has previously been used for feedback  in vision based formation control \cite{moshtagh2009vision}. The control objective is twofold: (i) avoid an unknown uncooperative obstacle and (ii) maintain a minimum time-to-collision.  





Real-time measurement of loom with a monocular camera is equivalent to measuring time-to-collision. Hence, we can state the following assumptions based on the problem formulation in \cite{marinho2018guaranteed} and \cite{moshtagh2009vision}.
\begin{asm}
	\label{asm:loom}
	$l(t)$ is an available measurement, although $\rho(t)$ and $\dot \rho(t)$ are unknown.
\end{asm}

\begin{asm}
\label{asm:sens_available}
The variables $l(t)$, $\lambda(t)$ and $\dot \lambda(t)$ are the only exogenous quantities available to the collision avoidance system, by means of the line-of-sight angle, optical flow and time-to-collision (from the vision-system and gimbal) and the heading angle of the vehicle (from the IMU).
\end{asm}
Now that we have defined the limitations of each component in the avoidance dynamics, we will introduce the dynamics of the variables relevant to the formulated problem. The approaching speed  and acceleration are given by:
\begin{align}
    \dot \rho(t) &= V_\mathrm{o}\left(t\right)\cos\left(\psi_\mathrm{o}\left(t\right)-\lambda \left(t\right)\right)-V_\mathrm{r}\left(t\right)\cos\left(\psi_\mathrm{r}\left(t\right) - \lambda \left(t\right)\right) \nonumber\\ \nonumber
    \ddot \rho(t) &= V_\mathrm{r}\left(t\right)\,\sin\left(\psi_\mathrm{r}\left(t\right)-\lambda \left(t\right)\right)\, \left( \dot \psi_\mathrm{r} - \dot \lambda \left(t\right)\right)\\ \nonumber
    & \, -\dot V_\mathrm{r}\left(t\right)\cos\left(\psi_\mathrm{r}\left(t\right) - \lambda \left(t\right)\right)\,  \\
     & \, -V_\mathrm{o}\left(t\right)\sin\left(\psi_\mathrm{o}-\lambda \left(t\right)\right)\left(\dot \psi_\mathrm{o}(t) - \dot\lambda \left(t\right) \right) \nonumber\\
     &\,+\dot V_\mathrm{o}\left(t\right)\cos\left(\psi_\mathrm{o}\left(t\right)-\lambda \left(t\right)\right)
     \label{eq:dynamic_LOS}
\end{align}
Also, the line-of-sight angle's dynamics are described by:
\begin{align}
\label{eq:lambdadot}
\dot \lambda(t) =  \frac{V_\mathrm{o}\,\sin\left(\psi _\mathrm{o}-\lambda \left(t\right)\right)}{\rho \left(t\right)}   - \frac{V_\mathrm{r}\left(t\right)\,\sin\left(\psi_\mathrm{r}\left(t\right) - \lambda \left(t\right)\right)}{\rho \left(t\right)}
\end{align}

As a consequence of the additive structure of the collision avoidance heading control signal, it is natural to assume bounded tracking control signal.
\begin{asm}
\label{asm:track_bound}
The heading component of the exogenous tracking controller has limited authority given by a known value $u_\mathrm{tr, max} >0$, i.e.,
\begin{equation}
    \vert u_\mathrm{tr}(\cdot) \vert \leq u_\mathrm{tr, max}.
    \label{eq:boundvehicleangrate}
\end{equation}
\end{asm}

Lastly, limited acceleration is also required to formulate a well-posed avoidance problem.

\begin{asm}
\label{asm:acl}
The accelerations of the robot and the obstacle are bounded and known.
\begin{align}
    \vert a_\mathrm{r}(t) \vert = \vert \dot V_\mathrm{r}(t) \vert \leq a_\mathrm{r, max}, \:
    \vert a_\mathrm{o}(t) \vert = \vert \dot V_\mathrm{o}(t) \vert \leq a_\mathrm{o, max}
\end{align}
\end{asm}

Thus, formally we can define the considered problem as follows.

\begin{prob}\label{prob:evasion}
Consider a vehicle that detects an uncooperative obstacle at $t=t_0$, while performing a mission. Let the dynamics of the vehicle and the obstacle satisfy the bounds given by Assumptions \ref{asm:vel},  \ref{asm:loom},  \ref{asm:track_bound} and \ref{asm:acl}. Let the information available from the obstacle satisfy Assumption~\ref{asm:sens_available}. The objective is to derive a controller that can guarantee for all $t> t_0$:
\begin{enumerate}
    \item $\rho(t) \geq \rho_{\rm{safe}}$,  where $\rho_{\rm{safe}}>0$ is the desired minimal separation in space, the violation of which constitutes a collision;
    \item $l(\rho(t),\dot \rho(t)) > -\frac{1}{\tau_\mathrm{safe}}$, where $\tau_\mathrm{safe}$ is the desired minimal separation in time-to-collision.
\end{enumerate}

\end{prob}



\begin{figure}[ht]
			\centering
			\includegraphics[width=.7\linewidth]{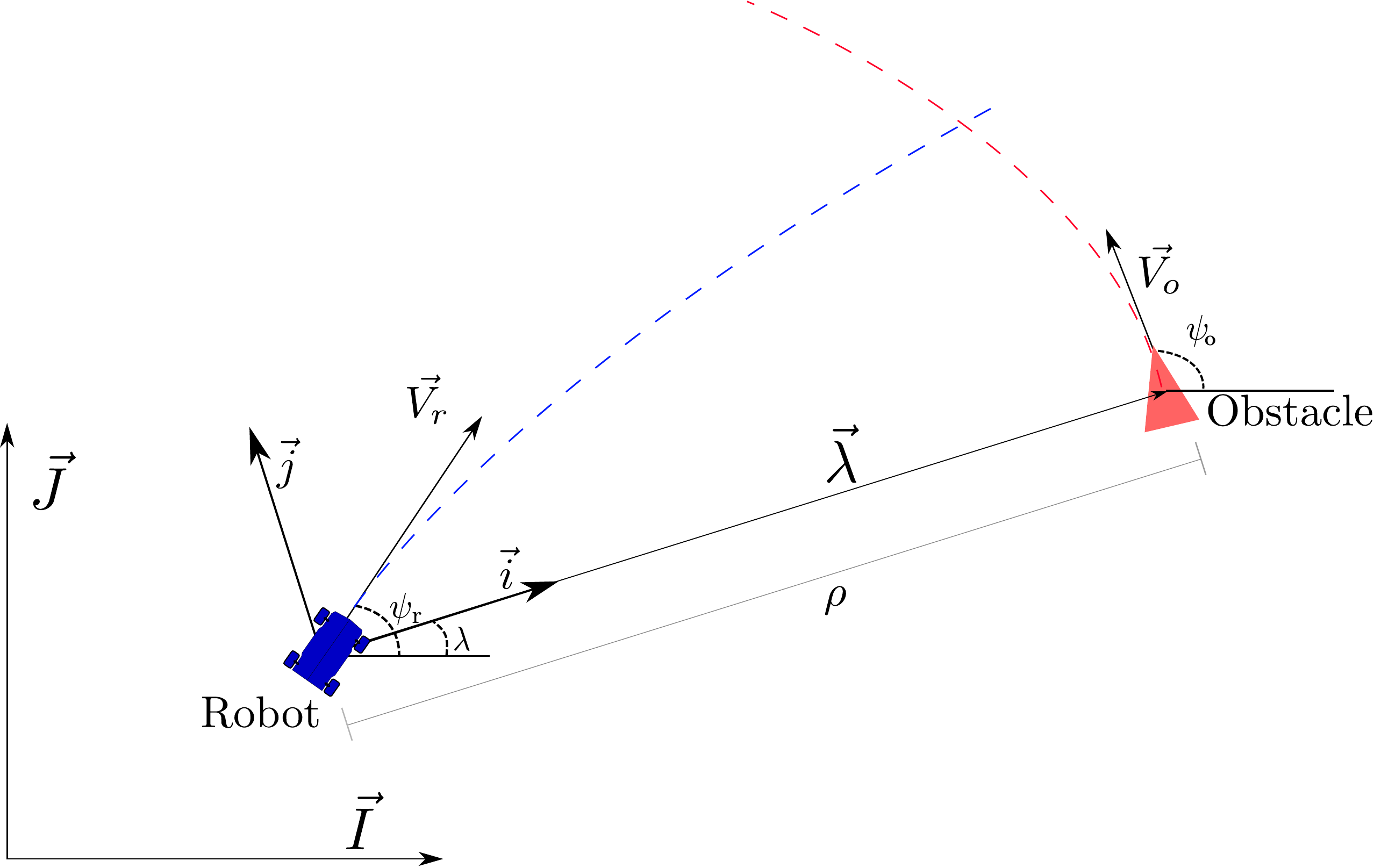}
			\caption{\small{The line-of-sight vector.}}
			\label{fig:geom}
\end{figure}

\section{Avoidance Control}
\label{sec:avoidance_control}
In this section we formulate Problem \ref{prob:evasion} as an avoidance control problem. Towards this end, we need to introduce the mathematical framework of avoidance control first developed in \cite{leitmann1977avoidance}.
Consider a continuous-time dynamical system $
\dot x(t) = f(x(t),u(t),d(t))$ with initial condition $ x(t_0) = x_0$, where  $x(t) \in \mathcal{X} \subseteq \mathbb{R}^n$ is the state vector, $u(t) \in \mathcal{U} \subseteq \mathbb{R}^m$ is the control law, $d(t) \in \mathbb{R}^d \subseteq \mathcal{D}$ is an external disturbance and $f$ is continuous in $x$.  The goal is to find a controller $u(t)$ that avoids an \textit{apriori} specified region of interest in the state space.
The set that the vehicle wants to ultimately avoid is defined as the \textit{Antitarget Region} $\mathcal{T}$, i.e., no solution of $\dot x = f(x, u, d)$ may enter $\mathcal{T}$. However, an analysis that shows avoidance of arbitrary $\mathcal{T}$ with a bounded control effort might not be possible. This can occur when the dimension of $\mathcal{T}$ is lower than the the dimension of $\mathcal{X}$. An example of this limitation is presented in \cite{leitmann1978evasion}.  To address this, one defines a higher-dimensional subset that contains $\mathcal{T}$ and is composed of all state variables required to perform the avoidance analysis, that is, $\dim\left(\mathcal{A}\right) = \dim\left(\mathcal{X}\right) $. One refers to the set $\mathcal{A} \supseteq \mathcal{T}$  as the \textit{Avoidance Region}, where the states are  not allowed to enter. It is important to notice that once the set $\mathcal{A}$ is avoided,  then $x(t) \notin \mathcal{T}$ for any time $t$. Finally, define the closed set $\Delta_\mathcal{A}$, where $\mathcal{A} \subset \Delta_\mathcal{A} \subseteq\mathcal{X}$. Then, we call the set $\Omega_\mathcal{A} =  \Delta_\mathcal{A} \setminus \mathcal{A}$  the \textit{Conflict Region}.

Given these definitions, we state the avoidance control objective:  given any disturbance $d(\cdot) \in\mathcal{D}\subset \mathbb{R}^d$, the regions $\mathcal{T}, \mathcal{A},$ and $\Omega_{\mathcal{A}}$,  derive a control $u$ such that  a trajectory $x(t)$ starting at $x(t_0) = x_0 \in \Omega_\mathcal{A}$ will never enter $\mathcal{A}$.

\section{Avoidance Regions}

In this section the anti-target, avoidance and conflict regions for our problem are presented. With the definition of those regions we will derive conditions for which the bound of Assumptions \ref{asm:vel} - \ref{asm:loom} must satisfy. To this end, consider the state space  $x(t)= [x_1(t), \, x_2(t),\, x_3(t),\, x_4(t),\, x_5(t),\, x_6(t)]^\intercal $ that captures the dynamics of inverse of time-to-collision (loom) and the inverse of distance, such that%
\begin{align}
x_1(t) &=\frac{\dot \rho(t)}{\rho(t)}, \quad x_2(t) = \lambda(t),\quad x_3(t) = \dot \lambda(t), \nonumber \\ 
x_4(t) &= \frac{1}{\rho(t)}, \quad x_5(t) = \psi_r(t), \quad x_6(t) = V_r(t)
\end{align}%
Note that, later in this paper, we will prove that $\rho$ is bounded from below, and hence $\frac{1}{\rho}$ will be bounded. Using the line-of-sight dynamics \eqref{eq:dynamic_LOS} and \eqref{eq:lambdadot}, we can write the dynamics of our system as follows:
\begin{align} 
\label{eq:statespace}
\dot x_1(t) &=\frac{\ddot \rho(t) \rho(t) - \dot \rho^2(t)}{\rho^2(t)}= \frac{\ddot \rho(t)}{\rho(t)} - \left(\frac{\dot \rho(t)}{\rho(t)}\right)^2 = x_4 \ddot \rho(t) - x_1^2(t),\nonumber \\
\dot x_2(t) &= x_3(t), \quad  \dot x_3(t) = \ddot \lambda(t), \quad \dot x_4(t) = -x_4(t) x_1(t), \nonumber\\
\dot x_5(t) &= u(t),\quad
\dot x_6(t) = u_v(t),
\end{align}
where $x_1 \in \mathbb{R}_{\leq 0}$, $x_2 \in [-\pi, \; \pi]$, $x_3\in \mathbb{R}$, $x_4 \in \mathbb{R}_{\geq 0}$, $x_5 \in [-\pi, \; \pi]$, $x_6\in [V_\mathrm{r, min}, V_\mathrm{r, max}]$, and 
\resizebox{\linewidth}{!}{
  \begin{minipage}{\linewidth}
  \begin{align*}
\ddot \rho   &=x_6\left(t\right)\sin\left(x_5\left(t\right)-x_2 \left(t\right)\right)\left( u(t) - x_3 \left(t\right)\right)\nonumber -u_v\left(t\right)\cos\left(x_5\left(t\right) - x_2 \left(t\right)\right)  \\
     &-V_\mathrm{o}\left(t\right)\sin\left(\psi_\mathrm{o}-x_2 \left(t\right)\right)\left(\dot \psi_\mathrm{o}(t) - x_3 \left(t\right) \right) +\dot V_\mathrm{o}\left(t\right)\cos\left(\psi_\mathrm{o}\left(t\right)-x_2 \left(t\right)\right)
\end{align*} \end{minipage}}
From this point forward the time dependence $(t)$ will be dropped for notation convenience.


To fulfill our objectives we want $\tau > \tau_\mathrm{safe}$ and $\rho = \frac{1}{x_4} > r\geq \rho_{\text{safe}}$. Thus, we have to create an equivalent region on $x_1$ defined by loom. The following anti-target set represents such region: 
\[
\mathcal{T} = \left\{ x_1 \leq -\frac{1}{\tau_\mathrm{safe}}\right\} \cup \left\{ \frac{1}{x_4} \leq r\right\}.
\]
Note that in this case, the line-of-sight angle rate $x_3$ given in \eqref{eq:lambdadot} is bounded by $L >0$ when the obstacle is outside of the anti-target region, i.e. $x \in \mathcal{T}^\complement$. The bound $L$ is given by:
\begin{align*}
\vert x_3 \vert &= \vert x_4 \big( V_\mathrm{o} \sin(\psi_\mathrm{o} - x_2) - x_6 \sin(x_5 - x_2) \big) \vert \\
&\leq  \frac{V_\mathrm{r, max} + V_\mathrm{o, max}}{r} = L.
\end{align*}

With the desire to achieve avoidance with a bounded controller, an avoidance set larger than $\mathcal{T}$ needs to be defined. This makes sure the controller has enough time to evade the obstacle given that there is a limited evading control resource. To illustrate the need of a larger avoidance set, consider the case of an obstacle appearing very close to the boundary of $\mathcal{T}$ such that it starts too close and approaches too fast; in this case, there is no feasible avoidance strategy without using large control signals. We chose to represent the extra time as a relationship between the line-of-sight angle $x_2$ and the robot's heading angle $x_5$. Define $\Delta_t(x)$ as a state-dependent function that represents the extra time-to-collision chosen to create a safety buffer such that
\begin{equation}
\Delta_t(x) = \frac{\pi - \vert x_5 - x_2\vert}{\beta},
\end{equation}
where $\beta>0$ is a design parameter. Larger the $\beta$,  less "safety cushion" is given around $\mathcal{T}$, and therefore a higher bound on the control effort is needed. 
The intuition is that if $x_5 - x_2 =0$, the robot is facing the obstacle and it needs the largest amount of time (given the bound on the control) to turn around to larger angles of $x_5-x_2$. When $x_5 - x_2 $  approaches $\pi$ or $-\pi$, the robot is "running away" from the obstacle and doing its best to avoid collision and increase the time-to-collision. 
Now we can build the avoidance set $\mathcal{A}$ using $\Delta_t$. Similar to the anti-target set, the avoidance region is the union of a region for $x_1$ and another for $x_4$, that is, $\mathcal{A} = \mathcal{A}_1\cup\mathcal{A}_2$. Define $\mathcal{A}_1$ as:
\[
\mathcal{A}_1 = \left\{ x_1 \leq -\frac{1}{\tau_\mathrm{safe} + \Delta_t(x)}\right\}.
\]
The worst case scenario formulation is the basis to create a similar buffer region $\mathcal{A}_2$ for the minimal distance component. While the robot is maneuvering in $\Delta_t$ time, we compute how much distance is traveled by the robot to get to the safest configuration $\vert x_5 - x_2 \vert= \pi$:
\begin{equation*} S_r = \int_t^{t+\Delta_t} x_6 \cos(x_5 - x_2) dt.
\end{equation*}
At the same time, the obstacle travels
\begin{equation*}
S_o = \int_t^{t+\Delta_t} V_\mathrm{o} \cos (\psi_o(t) - x_2) dt,\end{equation*}
hence, the total separation distance is
\begin{equation*}
S = S_r + S_o \leq (x_6 + V_\mathrm{o, max})\Delta_t.
\end{equation*}
Therefore, the avoidance set is
\begin{align*}
\mathcal{A} &= \left\{ x_1 \leq -\frac{1}{\tau_\mathrm{safe} + \Delta_t(x)}\right\} \bigcup \\
&\left\{\frac{1}{x_4} \leq r + (x_6 + V_\mathrm{o, max})\Delta_t(x) \right\}.
\end{align*}
There are many ways to design a suitable conflict region and for this purpose we chose:
\begin{align}
    \Omega_\mathcal{A} = &\left\{x\in \mathcal{X} \left\vert  -\frac{1}{\tau_\mathrm{safe} + \Delta_t} < x_1 \leq -\frac{\beta}{\gamma}, \ \right. \right. \nonumber \\
    &\left. r + (x_6 + V_\mathrm{o, max})\Delta_t<\frac{1}{x_4} \leq \omega  \right\},
\end{align}
where we define the constant $\gamma = \tau_\mathrm{safe} \beta + \pi$, and $\omega$ is a design parameter that must be chosen so that 
\begin{equation}
    \label{eq:omega} 
\omega > r + (V_\mathrm{r, max} + V_\mathrm{o, max} )\frac{\pi}{\beta}. 
\end{equation}
\begin{figure}
    \centering
    \includegraphics[width=.25\textwidth]{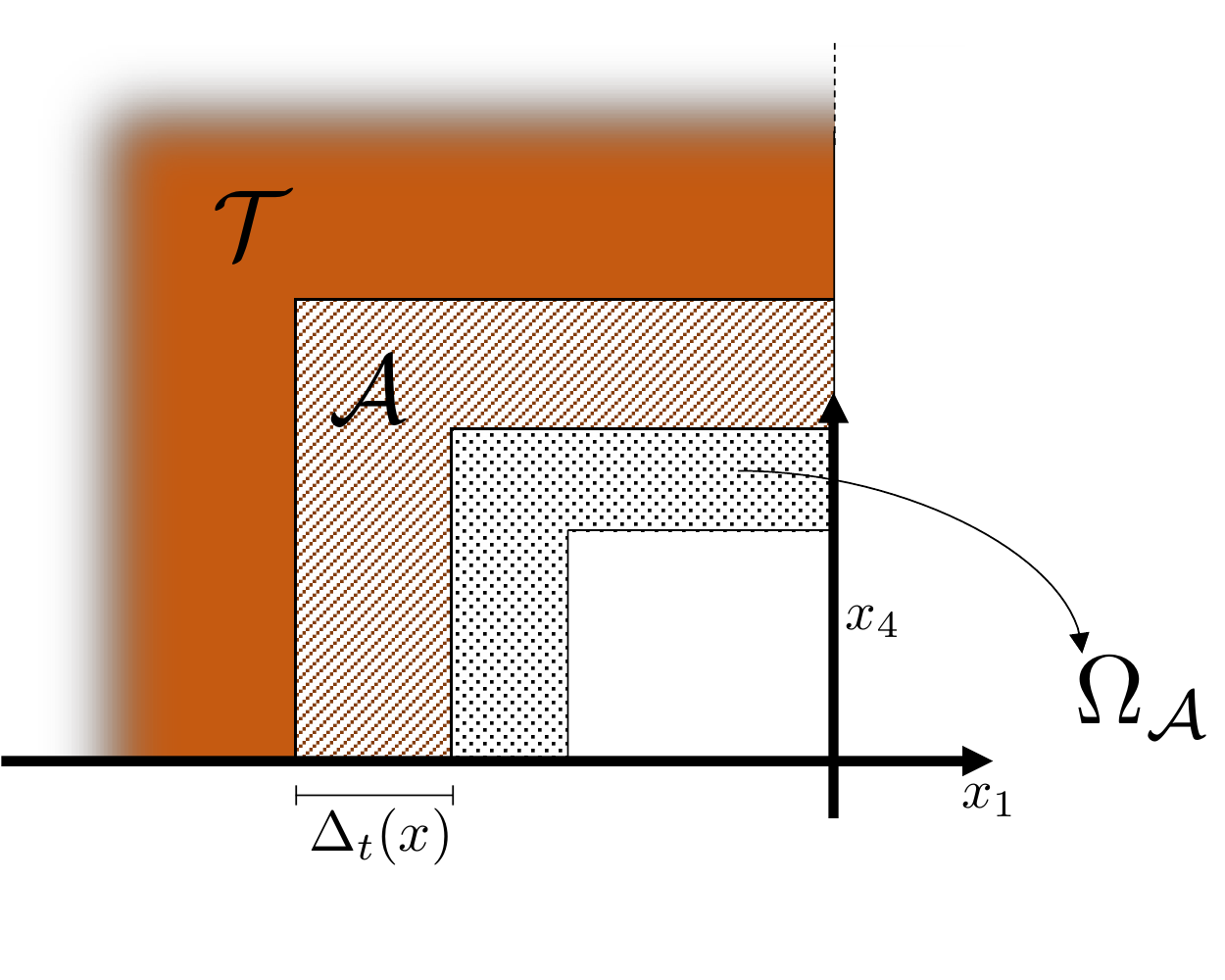}
    \caption{\small{A 2D slice for a given $\Delta_t(x)$ of the anti-target, avoidance and conflict regions.}}
    \label{fig:slice_regions}
\end{figure}

\section{Main Result}
Recall that the heading control law that drives the angular rate of the robot is given by 
\begin{equation}
\label{eq:control}
    u(t) =u_\mathrm{ca}(t) + u_\mathrm{tr}(t),
\end{equation}
where the outer loop exogenous controller is $u_\mathrm{tr}$ that satisfies Assumption \ref{asm:track_bound}. We present the following avoidance component of the heading controller:
\begin{align}
\label{eq:ca_control}
    u_\mathrm{ca} &= \alpha_0(x) \Big( \frac{\gamma^2}{\beta} \left(x_1^2  + \alpha_1(x) \right) + \alpha_2(x) \Big) + x_3,
\end{align}
 where the state dependent terms $\alpha_0$, $\alpha_1$ and $\alpha_2$ are defined as:
 \begin{align*}
     \alpha_0(x)&=\begin{cases}
     \: 1 & \text{if } x_5-x_2 \geq 0\\
     -1 & \text{if } x_5-x_2 < 0
     \end{cases}\\
\alpha_1(x) &=\big(k x^2_1 + x_6  \max\{0,- x_3\sin(x_5-x_2)\}  \\
& - \min \{ 0, -a_\mathrm{r} \cos(x_5 - x_2) \}\big) \frac{1}{ r + (x_6 + V_\mathrm{o, max} ) \Delta_t(x) }\\
\alpha_2(x)& = -2\beta\omega\frac{x_1}{x_6+V_{\mathrm{o},\max} },
\end{align*}
with $k>0$ being a scalar control gain. 



Before stating the main result, let's explicitly introduce the bounds defined in Assumptions \ref{asm:vel} - \ref{asm:loom} that must be satisfied for us to show that under this constrained controller our objective of avoidance is achieved.

The limited control authority of the exogenous tracking effort defined in Assumption \ref{asm:track_bound} is dependent on the control gain $k$, the design parameters $\beta$, $\tau_\mathrm{safe}$, and the minimum speed of the robot and maximum speed of the obstacle:
\begin{align}
\label{eq:track_bound}
    u_\mathrm{tr, max} & < \min\left\{ \frac{\beta^2\omega}{V_\mathrm{r, max} \gamma}, \;  \beta\right\},
\end{align}
while the velocity controller must respect the following acceleration limit:
\begin{equation}
    a_{\max} < \frac{\beta^2 \omega}{\gamma \pi}
    \label{eq:accel_bound}
\end{equation}

We can also show the existence of a bound for $u_\mathrm{ca}$, by looking for an upper bound of each component of $u_\mathrm{ca}$ and finding a conservative upper bound $\vert u_\mathrm{ca} \vert \leq u_\mathrm{max}$\footnote{$\max_{z}\{g(z) + h(z)\} \leq \max_{z}\{g(z)\} + \max_{z}\{h(z)\}$}:
\begin{align}
\label{eq:u_bound}
u_{\max}& = \frac{\gamma^2}{\beta}\left( k\left(\frac{1}{\tau_r}\right)^2 + \frac{V_\mathrm{r, max} L}{r + (V_\mathrm{r, min} + V_\mathrm{o, max}) \frac{\pi}{\beta} }\right)\nonumber\\
&\;+ 2\frac{\gamma^2}{V_\mathrm{r, min} \tau_r} + L 
\end{align}

\begin{rem}
While \eqref{eq:u_bound} shows that the proposed avoidance control is bounded, it is a conservative bound. The system can't achieve a location of the state space, where all the terms will have the values as in \eqref{eq:u_bound}, and finding the lowest upper bound for $u_\mathrm{ca}$ would involve solving a nonlinear optimization problem constrained to \eqref{eq:statespace}. Nevertheless this bound gives some insight on how to choose $\tau_\mathrm{safe}$ and $r$ and its relationship to $V_\mathrm{r, min}$, $V_\mathrm{r, max}$, $V_\mathrm{o, max}$ and $k$. 
\end{rem}

 The evading maneuver starts as soon as an object is detected at time $t_0$. It is assumed that at this moment the relative position and velocity between the robot and the object is such that $x \notin \mathcal{A}$. In other words, there is an initial space and time-to-collision separation.

\label{sec:result}

\begin{thm}[Collision Avoidance]
\label{lemma}
Consider the system~\eqref{eq:statespace}, that describes the kinematics of a robot or vehicle travelling with speed $V_\mathrm{r}(t)$ and a non-cooperative moving obstacle with speed $V_\mathrm{o}(t)$ and heading $\psi_\mathrm{o} (t)$, both satisfying Assumption~\ref{asm:vel}. 

We assume that the collision maneuver starts as soon as the obstacle is detected at time $t = t_0$, such that $x(t_0) \notin \mathcal{A}$. Consider the control law \eqref{eq:control} with an exogenous tracking controller $u_\mathrm{tr}$ that satisfies Assumption~\ref{asm:track_bound} and the inequality in~\eqref{eq:track_bound}, and the avoidance control $u_\mathrm{ca}$ given by \eqref{eq:ca_control}. Let $k$ satisfy
\begin{align}
\label{eq:kgain}
    k &\geq  \left(\frac{\gamma}{\beta} \right)^2 \left(V_\mathrm{o, max}\dot \psi_\mathrm{o, max}  +a_\mathrm{o, max}\right).
\end{align}
Then, $\mathcal{A}$ is avoidable, and the trajectory $x(t)$ never reaches the anti-target set $\mathcal{T}$ for any $t \geq t_0$.
\end{thm}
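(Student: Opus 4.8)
The plan is to treat the statement as a Leitmann-type avoidance problem and to establish forward invariance of the complement of $\mathcal{A}$ by a subtangentiality (Nagumo-type) argument applied to each piece of the union $\mathcal{A}=\mathcal{A}_1\cup\mathcal{A}_2$. Since each piece is a sublevel set of a smooth margin, I would introduce $g_1(x)=x_1+\tfrac{1}{\tau_\mathrm{safe}+\Delta_t(x)}$ and $g_2(x)=\tfrac{1}{x_4}-r-(x_6+V_\mathrm{o,max})\Delta_t(x)$, so that being outside $\mathcal{A}$ is exactly $g_1\geq 0$ and $g_2\geq 0$. Because the boundaries move with the state through $\Delta_t$, I would prove that whenever $g_i=0$ one has $\dot g_i\geq 0$ for every admissible obstacle maneuver and every admissible exogenous input; this makes $\{g_i\geq 0\}$ forward invariant, and since $x(t_0)\notin\mathcal{A}$ the trajectory never enters $\mathcal{A}$, hence never enters $\mathcal{T}\subseteq\mathcal{A}$, which delivers both objectives $x_1>-1/\tau_\mathrm{safe}$ and $1/x_4>r\geq\rho_\mathrm{safe}$. (Equivalently one may build an avoidance function blowing up at $\partial\mathcal{A}$ and show it is nonincreasing; the content is the same boundary inequality.)

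For the loom boundary I would differentiate $g_1$ along \eqref{eq:statespace}, using $\dot x_1=x_4\ddot\rho-x_1^2$ and $\dot\Delta_t=-\alpha_0(x)(u-x_3)/\beta$ (from $\dot x_5=u$, $\dot x_2=x_3$), and evaluate on $g_1=0$, where $(\tau_\mathrm{safe}+\Delta_t)^2=x_1^{-2}$. Substituting the control \eqref{eq:control}--\eqref{eq:ca_control} and using $\alpha_0^2=1$ collapses the sign dependence, turning $\alpha_0(u-x_3)$ into $\tfrac{\gamma^2}{\beta}(x_1^2+\alpha_1)+\alpha_2+\alpha_0u_\mathrm{tr}$; moreover the $+x_3$ inside $u_\mathrm{ca}$ cleans the robot's own cross term in $\ddot\rho$, and $\sin(x_5-x_2)\alpha_0=\lvert\sin(x_5-x_2)\rvert\geq 0$ confirms the turning actuates in the opening direction. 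The obstacle enters only through the terms $-V_\mathrm{o}\sin(\psi_\mathrm{o}-x_2)\dot\psi_\mathrm{o}+\dot V_\mathrm{o}\cos(\psi_\mathrm{o}-x_2)$, bounded by $V_\mathrm{o,max}\dot\psi_\mathrm{o,max}+a_\mathrm{o,max}$ via Assumption~\ref{asm:vel} and \ref{asm:acl}; the gain condition \eqref{eq:kgain} is precisely what makes the $kx_1^2$ contribution in $\alpha_1$ dominate this worst case, while the $\max\{0,\cdot\}$ and $\min\{0,\cdot\}$ terms in $\alpha_1$ absorb the sign-indefinite $x_3\sin(\cdot)$ and $a_\mathrm{r}\cos(\cdot)$ contributions, and \eqref{eq:track_bound} keeps $\alpha_0u_\mathrm{tr}$ subordinate. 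The conclusion is $\dot g_1\geq 0$.

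For the distance boundary I would use $\tfrac{d}{dt}(1/x_4)=x_1/x_4$ (from $\dot x_4=-x_4x_1$) together with $\dot\Delta_t$; on $g_2=0$ the factor $1/x_4$ equals the threshold $r+(x_6+V_\mathrm{o,max})\Delta_t$, which is exactly the denominator of $\alpha_1$, so that factor cancels. The term $\alpha_2$, calibrated through the design parameter $\omega$, satisfies $\tfrac{x_6+V_\mathrm{o,max}}{\beta}\alpha_2=-2\omega x_1\geq 0$ (recall $x_1\leq 0$), and the constraint \eqref{eq:omega} makes this positive margin outweigh the approach term $x_1/x_4\leq 0$, while the acceleration bound \eqref{eq:accel_bound} controls the $\dot x_6\Delta_t$ term and \eqref{eq:track_bound} again dominates the tracking input; this yields $\dot g_2\geq 0$. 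Assembling the two invariance results and recalling that a single $u_\mathrm{ca}$ must enforce both simultaneously (the common sign $\alpha_0$ fixes the turn direction and the magnitude is sized for whichever constraint is active) completes the argument that $\mathcal{A}$, and hence $\mathcal{T}$, is never reached.

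The hard part will be the worst-case disturbance bookkeeping in these two boundary derivatives: one must verify that the fixed, sign-switching feedback $u_\mathrm{ca}$ renders both $\dot g_1$ and $\dot g_2$ nonnegative \emph{uniformly} over all unknown $(V_\mathrm{o},\psi_\mathrm{o},\dot\psi_\mathrm{o},\dot V_\mathrm{o})$ consistent with Assumptions~\ref{asm:vel} and \ref{asm:acl} and over all $u_\mathrm{tr}$, $a_\mathrm{r}$ within \eqref{eq:track_bound}--\eqref{eq:accel_bound}, including the degenerate configuration $x_5-x_2\to 0$ where the heading loses authority over $\ddot\rho$ (mitigated by $\Delta_t$ being maximal there). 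Showing that the threshold \eqref{eq:kgain} is exactly sufficient to dominate the obstacle's combined angular-plus-linear acceleration, and that the same control handles the coupled union $\mathcal{A}_1\cup\mathcal{A}_2$ rather than each region in isolation, is the technical crux.
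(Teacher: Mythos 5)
Your margin functions are the same as the paper's (its $A_1$ equals your $g_1$, since $\tfrac{\beta}{\gamma-\vert x_5-x_2\vert}=\tfrac{1}{\tau_\mathrm{safe}+\Delta_t}$), and your treatment of the loom face $\{g_1=0\}$ would go through essentially as the paper's estimate of $\dot A_1$: on that face $x_1=-\tfrac{1}{\tau_\mathrm{safe}+\Delta_t}\leq-\tfrac{\beta}{\gamma}$ holds automatically, so $kx_1^2\geq k\beta^2/\gamma^2$ dominates the obstacle terms via \eqref{eq:kgain}, and the $\alpha_2$ contribution dominates $u_\mathrm{tr}$ via \eqref{eq:track_bound}. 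The genuine gap is on the distance face. Your Nagumo program requires $\dot g_2\geq 0$ everywhere on $\{g_2=0\}\cap\{g_1\geq 0\}$, and that inequality is false there. Every term of $\dot g_2$ that the controller guarantees to be nonnegative vanishes as $x_1\to 0$: combining $\tfrac{x_1}{x_4}$ with the $\alpha_2$ contribution gives a margin of order $-\omega x_1$, the remaining feedback terms are $O(x_1^2)$ (the $\gamma^2x_1^2/\beta$ term and the $kx_1^2$ part of $\alpha_1$), and the $\max\{0,\cdot\}$, $\min\{0,\cdot\}$ parts of $\alpha_1$ can be identically zero at adversarial configurations. Meanwhile the adversarial inputs are bounded only by the constants in \eqref{eq:track_bound} and \eqref{eq:accel_bound}. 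Concretely, take a point with $g_2=0$, $x_1=0$ (zero loom), $x_3\sin(x_5-x_2)\geq 0$, $u_v=0$; then $u_\mathrm{ca}=x_3$ and
\begin{equation*}
\dot g_2=\frac{x_6+V_\mathrm{o,max}}{\beta}\,\alpha_0\,u_\mathrm{tr}<0
\end{equation*}
whenever $u_\mathrm{tr}$ has sign $-\alpha_0$. So $\{g_2\geq 0\}$ is simply not forward invariant under the stated assumptions, and the subtangentiality argument collapses on this face.

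This is precisely why the paper does not argue on $\partial\mathcal{A}$ but inside the Leitmann conflict region $\Omega_\mathcal{A}$, whose definition adds the constraints $x_1\leq-\tfrac{\beta}{\gamma}$ and $\tfrac{1}{x_4}\leq\omega$. Those constraints are used in every dominance step of the paper's $\dot A_2$ estimate: $\left(\tfrac{1}{x_4}-2\omega\right)x_1\geq\tfrac{\beta\omega}{\gamma}$ needs both, $\tfrac{\gamma^2x_1^2}{\beta}\geq\beta$ against $\vert u_\mathrm{tr}\vert<\beta$ needs the first, and $\tfrac{\beta\omega}{\gamma}$ against $a_{\max}\tfrac{\pi}{\beta}$ uses \eqref{eq:accel_bound}. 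None of them is available on your face $\{g_2=0,\ x_1>-\tfrac{\beta}{\gamma}\}$. So your proposal is not a repackaging of the paper's proof: it replaces the paper's claim (monotonicity of $A_1$ and $A_2$ throughout $\Omega_\mathcal{A}$, the region through which the analysis assumes $\mathcal{A}$ must be approached) by the strictly stronger claim of subtangentiality on all of $\partial\mathcal{A}$, which this controller does not satisfy — it has no distance measurement and cannot generate a restoring action when the loom is mild. To repair your argument you would have to restrict the boundary analysis to $\partial\mathcal{A}\cap\overline{\Omega}_\mathcal{A}$ and separately justify why trajectories cannot reach the rest of $\partial\mathcal{A}$; that justification is exactly the role the conflict region plays in the paper, not a detail you can defer to bookkeeping.
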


\begin{proof}

Consider the two functions $A_1(x)$ and $A_2(x)$ that are designed to achieve their zero at $\{ x_1 = -\frac{1}{\tau_r + \Delta t}\}$ and $ \left\{\frac{1}{x_4}  = r + (x_6 + V_\mathrm{o, max} ) \Delta_t \right\}$ respectively: 

\begin{align}
    A_1(x) &= x_1 + \frac{\beta}{\gamma - \vert x_5 - x_2\vert} \\
    A_2(x) &=\frac{1}{x_4} - \left( r + (x_6 + V_\mathrm{o, max} )\left(\frac{\pi - \vert x_5 - x_2\vert}{\beta} \right) \right)
\end{align}

Notice that $A_1(x)$ and $A_2(x)$ are positive in $\mathcal{A}^\complement$. Define the Lyapunov like $C^1$-function $V(x)$, for $x \in \Omega_\mathcal{A}$, as:
 
\begin{align}
V(x) &= \log(A_1(x) + 1) + \log (A_2(x) + 1)
\end{align}
The time derivative of $V$ becomes
\[
\dot V = \frac{1}{A_1 + 1} \dot A_1 + \frac{1}{A_2 + 1} \dot A_2
\]

As shown in \cite{leitmann1977avoidance,leitmann1978evasion}, if $\dot V \geq0$ for all $x \in \Omega_\mathcal{A}$, then the collision avoidance is guaranteed. Therefore, to satisfy the avoidance condition in Theorem $\ref{lemma}$, it suffices to show that $u_\text{ca}$ guarantees $\dot A_1 \geq 0$ and $\dot A_2 \geq 0$ for all $x \in \Omega_\mathcal{A}$, given any $V_\mathrm{o}, \psi_\mathrm{o}$, $a_\mathrm{o}$ and $u_\text{tr}$ that satisfy Assumptions \ref{asm:vel}, \ref{asm:track_bound} and \ref{asm:acl}. 

First, consider the case where $(x_5 - x_2) >0$ and recall that $x\in \Omega_\mathcal{A}$. Then,
\begin{align*}
    \dot A_1 &= \dot x_1 + \frac{\beta \left( \dot x_5 - \dot x_2 \right)}{(\gamma - (x_5 - x_2))^2}= x_4 \ddot \rho - x_1^2 + \frac{\beta \left( u -x_3 \right)}{(\gamma - (x_5 - x_2))^2}
\end{align*}
Notice that 
\begin{align}\frac{\beta \left( u -x_3 \right)}{(\gamma - (x_5 - x_2))^2} \geq \frac{\beta \left( u_\mathrm{ca} -x_3 \right)}{\gamma^2} + \frac{\beta  u_\mathrm{tr} }{(\gamma - (x_5 - x_2))^2},
\end{align}
and $u_\mathrm{ca} > x_3$, which holds from equation \eqref{eq:ca_control} by using the fact that $x\in\Omega_\mathcal{A}$ implies $x_1<0$. Thus,
\begin{align*}
    \dot A_1 & \geq x_4 \ddot \rho - x_1^2 +\frac{\beta \left( u_\mathrm{ca} -x_3 \right)}{\gamma^2} + \frac{\beta  u_\mathrm{tr}}{(\gamma - (x_5 - x_2))^2}\\
    & = x_4(t) \big[x_6\sin\left(x_5-x_2 \right) \left( u - x_3 \right)\nonumber -u_v\cos\left(x_5 - x_2 \right)  \\
     & \, -V_\mathrm{o}\sin\left(\psi_\mathrm{o}-x_2 \right)\left(\dot \psi_\mathrm{o} - x_3  \right) +\dot V_\mathrm{o}\cos\left(\psi_\mathrm{o}-x_2 \right)\big]  \nonumber \\
    & \; - x_1^2 + \frac{\beta \left( u_\mathrm{ca} -x_3 \right)}{\gamma^2} + \frac{\beta  u_\mathrm{tr} }{(\gamma - (x_5 - x_2))^2}
\end{align*}

Moreover, by using the fact that if $x\in\Omega_\mathcal{A}$ then $x_1\leq -\frac{\beta}{\gamma}$ and that $u_{\text{tr}}$ satisfies condition \eqref{eq:track_bound}, one can show that $u>x_3$. Therefore, for $x_5-x_2>0$, we have  $x_6 \sin(x_5-x_2) (u-x_3) \geq 0$. So it follows that:

\begin{align*}
    \dot A_1 & \geq x_4 \big[ -u_v\cos\left(x_5 - x_2 \right)  +V_\mathrm{o}\sin\left(\psi_\mathrm{o}-x_2 \right) x_3 \\ & \;-V_\mathrm{o}\sin\left(\psi_\mathrm{o}-x_2 \right)\dot \psi_\mathrm{o}  +\dot V_\mathrm{o}\cos\left(\psi_\mathrm{o}-x_2 \right)\big] \\
    & \; - x_1^2 + \frac{\beta \left( u_\mathrm{ca} -x_3 \right)}{\gamma^2} + \frac{\beta  u_\mathrm{tr} }{(\gamma - (x_5 - x_2))^2}\\
    &\geq x_4 \big[ -u_v\cos\left(x_5 - x_2 \right)  +V_\mathrm{o}\sin\left(\psi_\mathrm{o}-x_2 \right) x_3 \\ & \;-\left(V_\mathrm{o, max}\dot \psi_\mathrm{o, max}  +\dot V_\mathrm{o, max}\right)\big] \\
    & \; - x_1^2 + \frac{\beta \left( u_\mathrm{ca} -x_3 \right)}{\gamma^2} + \frac{\beta  u_\mathrm{tr} }{(\gamma - (x_5 - x_2))^2}
\end{align*}
where $u_v$ is given by a trajectory tracking controller and is guaranteed to keep the velocity $V_\mathrm{r}$ in $ [V_\mathrm{r, \min}, V_\mathrm{r, \max} ]$. From this moment forward we shall consider $ u_v = a(t)$ given by an external velocity controller and use $a$ to maintain the notation brief.

The term $V_\mathrm{o} \sin( \psi_\mathrm{o} - x_2) x_3 $ could also be bounded by $-V_\mathrm{o, max} L$, but we chose to  make a less conservative substitution  looking into the dynamics of $x_3$. Using our state space variables, equation \eqref{eq:lambdadot} becomes
\begin{align*}
   x_3 &= x_4 \left(V_o \sin(\psi_o - x_2) - x_6 \sin(x_5 - x_2) \right).
\end{align*}
Multiplying both sides by $x_3$ and recalling that $x_4>0$ gives
\begin{align}
V_\mathrm{o} \sin( \psi_o - x_2) x_3 \geq   x_6 x_3 \sin(x_5 - x_2),
\end{align}
which allows  to continue with
\begin{align*}
    \dot A_1 &\geq x_4 \big[ x_3 x_6 \sin(x_5 - x_2)- a \cos(x_5 - x_2) \\ & \;-\left(V_\mathrm{o, max}\dot \psi_\mathrm{o, max}  +\dot V_\mathrm{o, max}\right)\big] \\
    & \; - x_1^2 + \frac{\beta \left( u_\mathrm{ca} -x_3 \right)}{\gamma^2} + \frac{\beta  u_\mathrm{tr} }{(\gamma - (x_5 - x_2))^2}.
\end{align*}
Here we substitute the controller by the one defined in \eqref{eq:ca_control}:
\begin{align*}
    \dot A_1  &\geq \frac{x_6  \max\{0, -x_3\sin(x_5-x_2)\}}{ r + (x_6 + V_\mathrm{o, max}) \Delta_t(x) } + x_3 x_4 x_6 \sin(x_5 - x_2)  \\
    &+ \left(-\frac{\min \{ 0, -a \cos(x_5 - x_2)}{ r + (x_6 + V_\mathrm{o, max})\Delta_t(x) } - x_4 a \cos(x_5 - x_2)  \right) \\
    & +  \frac{k x_1^2}{ r + (x_6 + V_\mathrm{o, max})\Delta_t(x) }-x_4\Big(V_\mathrm{o, max}\dot \psi_\mathrm{o, max} +a_\mathrm{o, max}\Big)\\
    &   -\frac{\beta^2}{x_6\gamma^2}\omega x_1 + \frac{\beta  u_\mathrm{tr} }{(\gamma - (x_5 - x_2))^2}
    \geq 0 
\end{align*}
Since we are only interested in evaluating $\dot V$ for $x\in \Omega_\mathcal{A}$, we have that $x_4 < \frac{1}{r + (x_6 + V_\mathrm{o, max}) \Delta_t}$ and $x_1 \leq \frac{-\beta}{\gamma} $. Therefore, with $k$ given by \eqref{eq:kgain}, we have that $\dot A_1 \geq 0$ if  
\begin{align}\vert u_\mathrm{tr} \vert &\leq \frac{\beta^2 \omega}{V_\mathrm{r, max}\gamma},\end{align} which is satisfied by  \eqref{eq:track_bound}.

Now, we will show that $\dot A_2\geq 0$. Notice that, from our state space definition, we have $\frac{d}{dt}\left(\frac{1}{x_4}\right) = \frac{d\rho(t)}{dt} = \dot \rho$ and $x_1(t) =\frac{\dot \rho(t)}{\rho(t)}$ or equivalently $\dot \rho(t) = \frac{x_1}{x_4}$. So $\dot A_2$ becomes:
\begin{align*}
\dot A_2 = &\frac{x_1}{x_4}+\frac{x_6+V_{\text{o},\max} }{\beta}(u-x_3)+\frac{\dot x_6}{\beta}(\pi-(x_5-x_2)) \\
=&\frac{x_1}{x_4}+\frac{x_6+V_{\text{o},\max} }{\beta}(u-x_3)+u_v\Delta_t(x).
\end{align*}
Substituting by the controller defined in equations \eqref{eq:control} and \eqref{eq:ca_control}, gives
\begin{align*}
\dot A_2 \geq& \frac{x_1}{x_4}+ \frac{\gamma^2  (x_6+V_{\text{o},\max})}{\beta^2}(x_1^2 +\alpha_1(x))-2\omega x_1  \\
&+\frac{x_6+V_{\text{o},\max} }{\beta}u_{\text{tr}}+u_v\Delta_t
\end{align*}
Note that $\alpha_1(x)\geq 0$. Thus, by rearranging the terms we have 
\begin{align*}
\dot A_2 &\geq \left(\frac{1}{x_4} - 2\omega\right)x_1+ \frac{x_6+V_{\text{o},\max}}{\beta}\left(u_{\text{tr}}+\frac{\gamma^2 x
_1^2}{\beta}\right)+u_v\Delta_t. 
\end{align*}
Since we are only interested in evaluating $\dot V$ in the conflict region where $x \in \Omega_A$, we know that $x_1\leq-\frac{\beta}{\gamma}$ and $\frac{1}{x_4}\leq\omega$. Then, the first term of $\dot A_2$ can be shown to be non-negative, as follows:
\begin{align*}
\left(\frac{1}{x_4} - 2\omega\right)x_1 \geq \left\vert\frac{1}{x_4} - 2\omega \right\vert\frac{\beta}{\gamma} \geq \frac{\beta\omega}{\gamma}.
\end{align*}
Moreover, we have $|u_\text{tr}|<\beta$ and  $|u_v|<a_{\max}<\frac{\beta^2\omega}{\gamma\pi}$. Thus,

\begin{align*}
\dot A_2 &\geq  \frac{\beta\omega}{\gamma} + \frac{x_6+V_{\text{o},\max}}{\beta}\left(u_{\text{tr}}+\beta\right)+u_v\Delta_t \\
& \geq \left(\frac{\beta\omega}{\gamma} +  \frac{u_v(\pi - |x_5 - x_2|)}{\beta} \right) + \frac{x_6+V_{\text{o},\max}}{\beta}\left(u_{\text{tr}}+ \beta \right) \\
&\geq0
\end{align*}

Hence, we conclude that for $x_5 - x_2>0$ we have that $\dot V \geq 0$. Since our problem is symmetric around the line-of-sight $x_2$, an analogous argument can show that for $x_5 - x_2 < 0 $, $\dot V \geq 0$.

At the surface $\left\{x\in \mathcal{X} | \ \ x_5-x_2 = 0\right\}$ the function $\dot V$ does not have a defined sign. However, this does not pose an issue in our analysis, since at this measure zero surface, $\dot V$ is non-negative at both sides of this surface. Hence, collision avoidance can still be guaranteed using our analysis.
\end{proof}

\section{Experimental Results}
This section presents experimental results performed with ground robots to validate the methodology. We consider two distinct scenarios of an obstacle approaching the evader. The evading robot follows a smooth polynomial trajectory across the arena, while the obstacle approaches from different directions. The goal is to show the efficacy of the avoidance control in the presence of sensor noise and time-delays inherent in practical applications. Moreover, the evader robot used is not a perfect unicycle model, further validating the robustness of the control law. In what follows, the system architecture and the indoor facility used to conduct the experiments are described, followed by a detailed discussion of the test results.

The platform used in the tests is the Jackal differential drive robot from Clearpath Robotics shown in Figure \ref{fig:exp1video}. Two robots were used, one as the evader and the other as the moving obstacle.  The robot with the orange cylindrical object is the obstacle to be avoided.

To localize both the evader and the obstacle, we used the precise localization technology in Virtual Reality headsets. By using inexpensive HTC Vive Trackers, positions, orientations and linear velocities were obtained for both agents.  Since the development of a vision system is outside the scope of this work, the time-to-collision and line-of-sight rate are computed from position and velocities, thus simulating the vision system. We emphasize that the framework does not use distance in feedback; distance is only used as a way replace the unavailable vision module. The obstacle's heading angle, velocity and distance to the evader are never used explicitly for avoidance and only used to calculate time-to-collision, line-of-sight and line-of-sight rate. The relationships for $\dot \lambda$ in \cite{marinho2018guaranteed} and \eqref{eq:loom} are used to calculate the information that would otherwise be obtained from a vision system.

The Vive trackers utilize multiple infra-red (IR) sensors and an IMU to perform the localization task. A base station emits IR light that sweeps the room at a frequency of 50Hz, hitting the IR sensors on the tracker at different time intervals. The size of these time intervals is a function of the relative position of the sensor to the base station. The tracker sends the data through Bluetooth to the HTC Vive system and filters it to create a precise estimate of the tracker's position and velocity.

The control loop runs on a centralized computer with Simulink and ROS (Robot Operating System). Simulink obtains the tracker's information through a Python script connected to the HTC Vive software. Then the script publishes the tracker's pose to a ROS topic.   The robot receives the heading rate $u(t)$ and speed $V_\mathrm{r}(t)$ published by ROS in Simulink, where the control algorithm is executed. A lower-level controller onboard the Jackal Robot commands the wheel motors to ensure that the heading rate and speed commands are executed. A user manually controls the obstacle that is also equipped with a Vive Tracker. Figure \ref{fig:setup} depicts the described setup.
\begin{figure}[ht]
    \centering
    \includegraphics[width=0.5\linewidth]{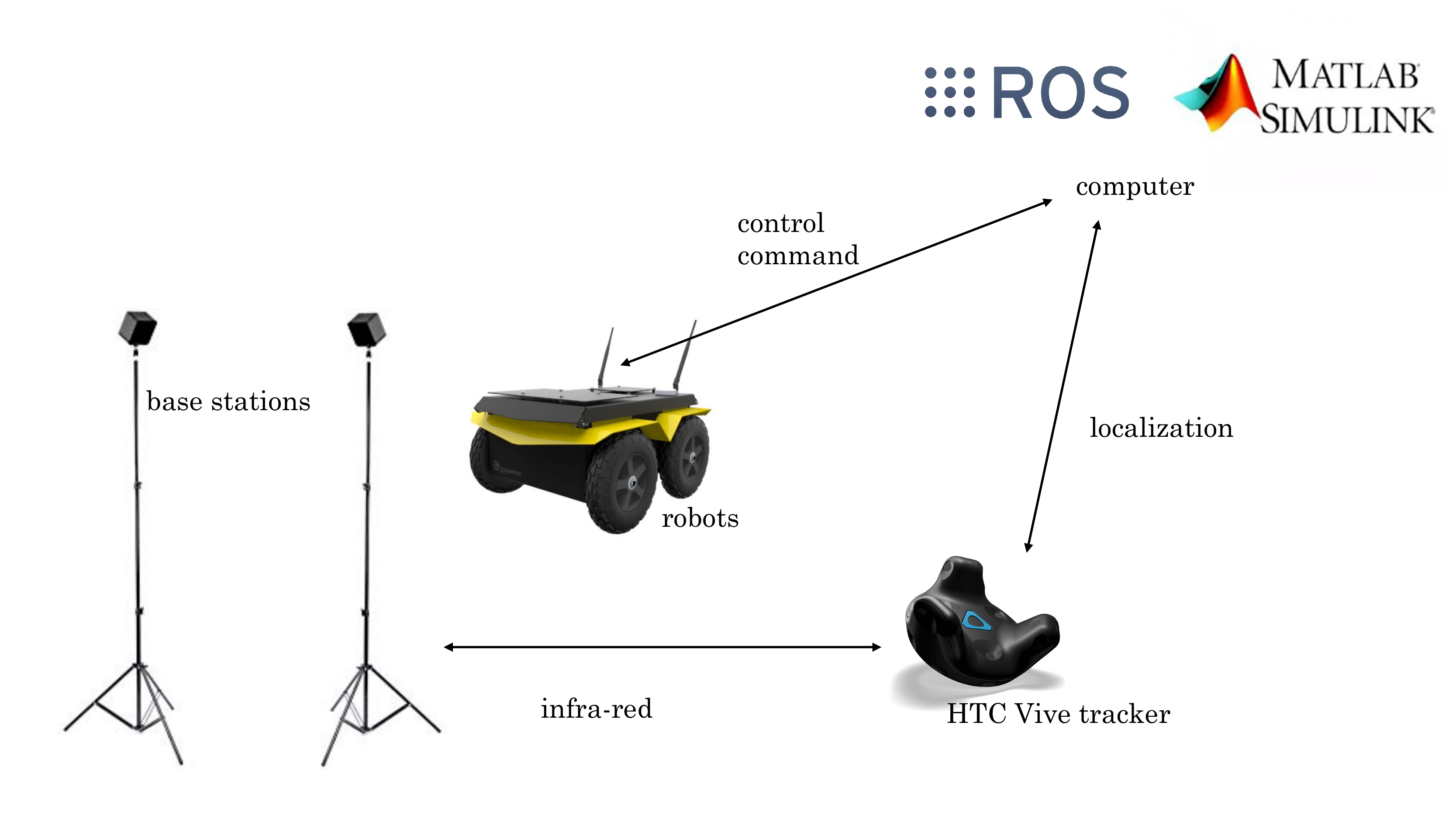}
    \caption{\small{Diagram of the hardware and software setup.}}
    \label{fig:setup}
\end{figure}

In all scenarios the robot follows the desired trajectory with the path following controller inspired by \cite{mastellone2008formation}:
\begin{align}
    u_\mathrm{tr} &=-3\sin(\psi_\mathrm{e}), \; V_\mathrm{r} = 1.5 D \cos(\psi_\mathrm{e}),
        \label{eq:tracking_controller}
\end{align}
where $\psi_\mathrm{e} = \psi_r - \mathrm{atan2}(y_e, \;x_e)$, $[x_e,\; y_e]^\top = [x_d - x_r, \; y_d - y_r]^\top$ and $D=\sqrt{x^2_e + y^2_e}$. We saturate both tracking controllers so that  $u_\mathrm{tr, max} = 1.0$ rad/s, $V_\mathrm{r, min} = 0.2$ m/s, $V_\mathrm{r, max} = 0.5$ m/s, and $a_\mathrm{r, max}= 3.5$ m/s$^2$. The obstacle is manually controlled through a joystick, and in our control design we assume that the maximum speed that the robot can achieve is $V_\mathrm{o, max}= 2.0$ m/s with $\dot \psi_\mathrm{o, max} = 0.5$rad/s. The goal is to maintain a safety distance of $r = 0.5$ m, that is consistent with the size of the Jackal robots, and guarantee a  minimum time-to-collision of $\tau_\mathrm{safe} = 0.5$ s. To achieve this, the avoidance controller was designed by choosing $\beta=6.3$ and control gains $\omega=1.75$ and $k=1$, which satisfy the sufficient condition for collision avoidance in \eqref{eq:omega} and \eqref{eq:kgain}.
\vspace{-.35cm}
\subsection{Scenario 1}
In the first scenario the obstacle heads directly towards the evader and the collision avoidance strategy successfully avoids the obstacle. An overlap of video frames is illustrated in Figure \ref{fig:exp1video} which shows the time lapse of this experiment. To visualize the time evolution of the maneuver, transparency was used in the superimposed images. More transparent images of the robot represent the past, while the more opaque objects show where the agents are later in the mission.
\begin{figure}[ht]
    \centering
    \includegraphics[width=0.55\linewidth]{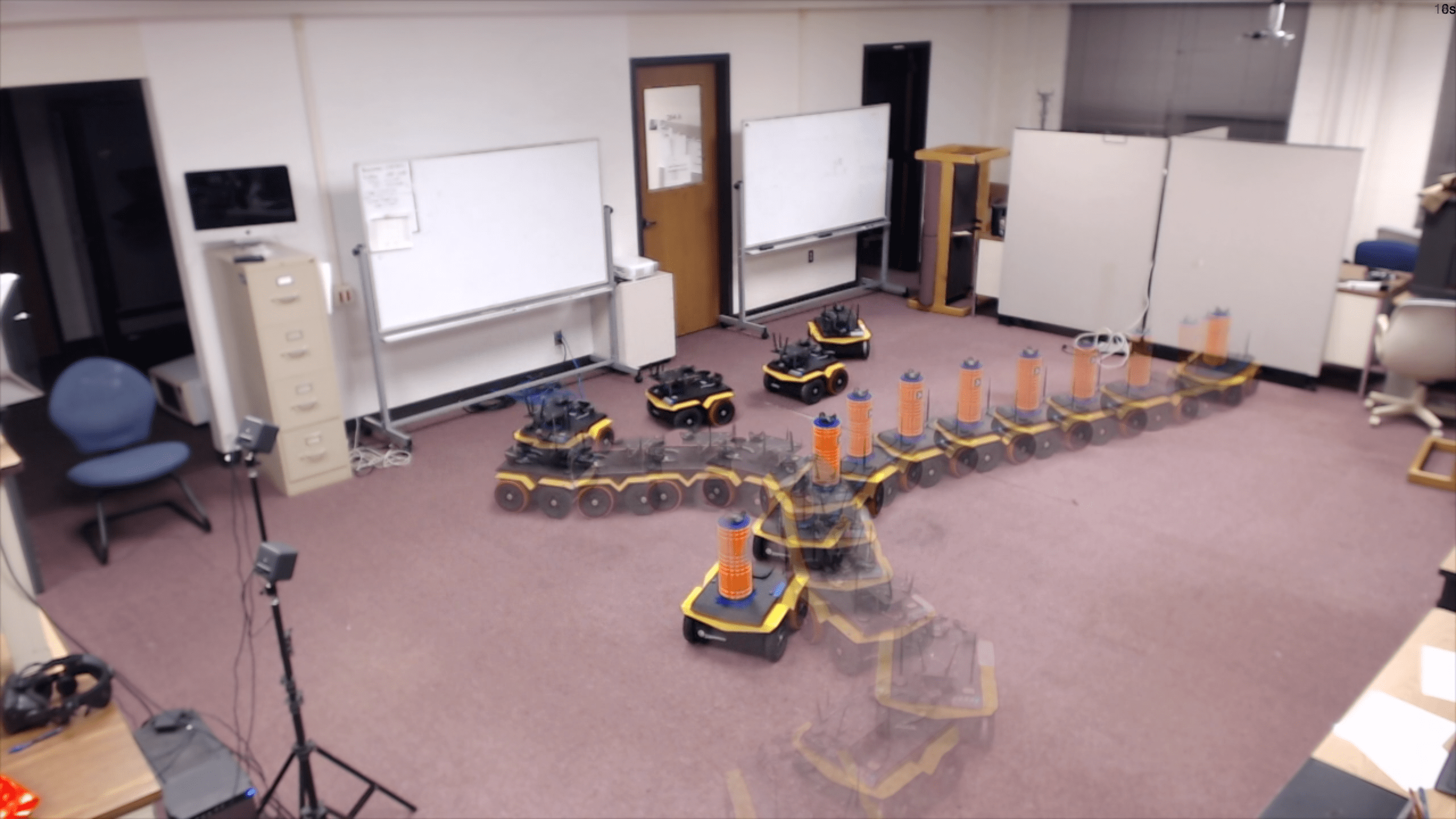}
    \caption{\small{Superimposed images from video of the fist experiment. The transparency represents the time evolution.}}
    \label{fig:exp1video}
\end{figure}

As seen in Figure \ref{fig:exp1}, at approximately 3.5 seconds, the obstacle starts moving, which generates a large enough loom (or small enough time-to-collision), and the evading maneuver starts. At this moment, the collision avoidance controller overcomes the tracking heading command. As a result, the robot starts to turn away from the planned path. Notice that at 7.5 seconds, the obstacle is close to the virtual target, and if the robot was only following the target, there would have been a collision. Thanks to the algorithm, the robot has moved away from this unsafe location.  At $t=9.5$s, the plots in Figure \ref{fig:exp1_s} show that the obstacle is no longer approaching the obstacle. This implies that the distance is increasing and the time to collision assumes large values. It follows that the robot automatically turns towards the virtual target, accelerates to catch-up,  and resumes the mission.

\begin{figure}[ht]
    \centering
    \includegraphics[width=0.6\linewidth]{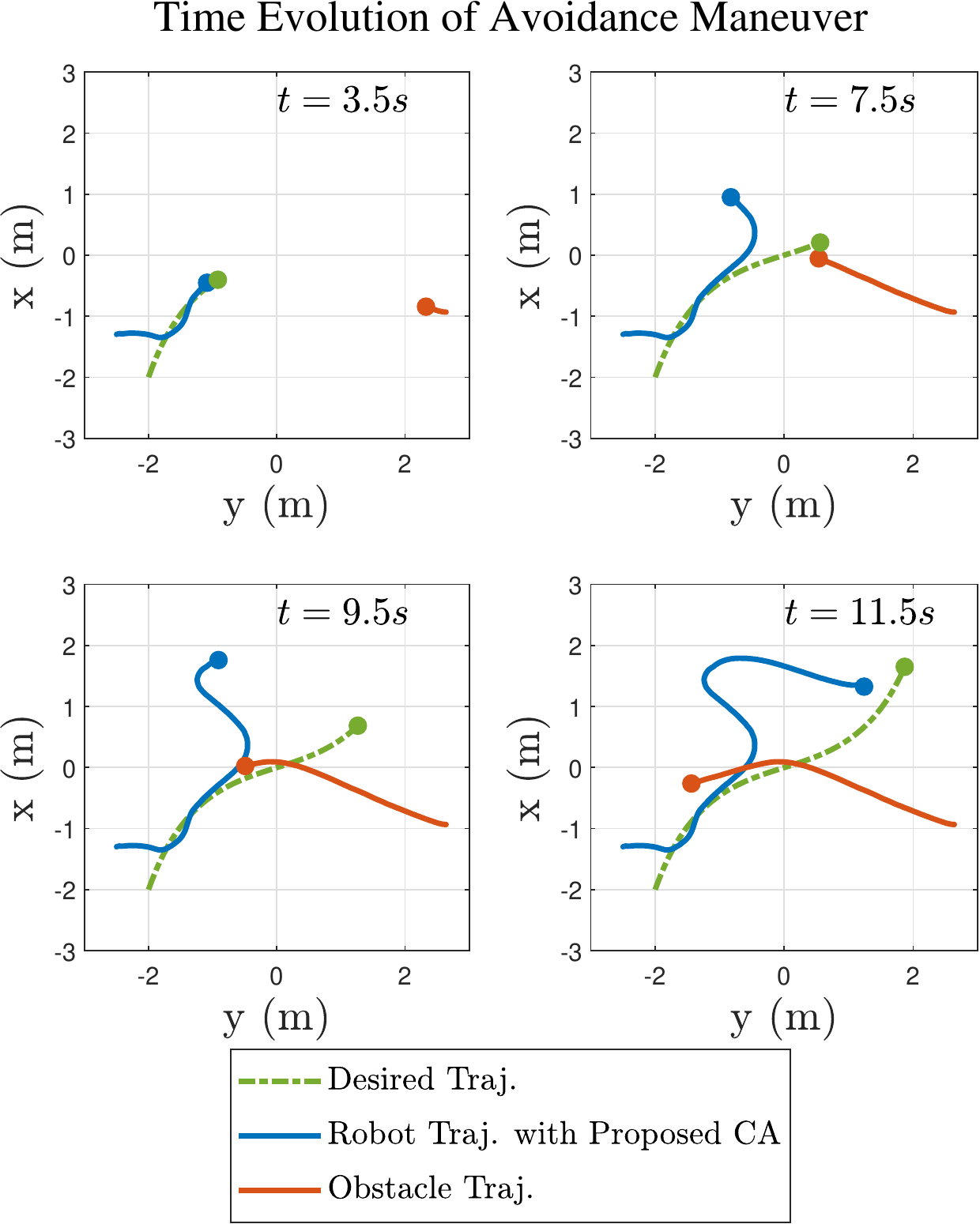}
    \caption{\small{Time evolution of robot's evading maneuver.}}
    \label{fig:exp1}
\end{figure}

\begin{figure}[ht]
	\centering
	\resizebox{.7\linewidth}{!}{
	{\includegraphics[width=0.4\linewidth]{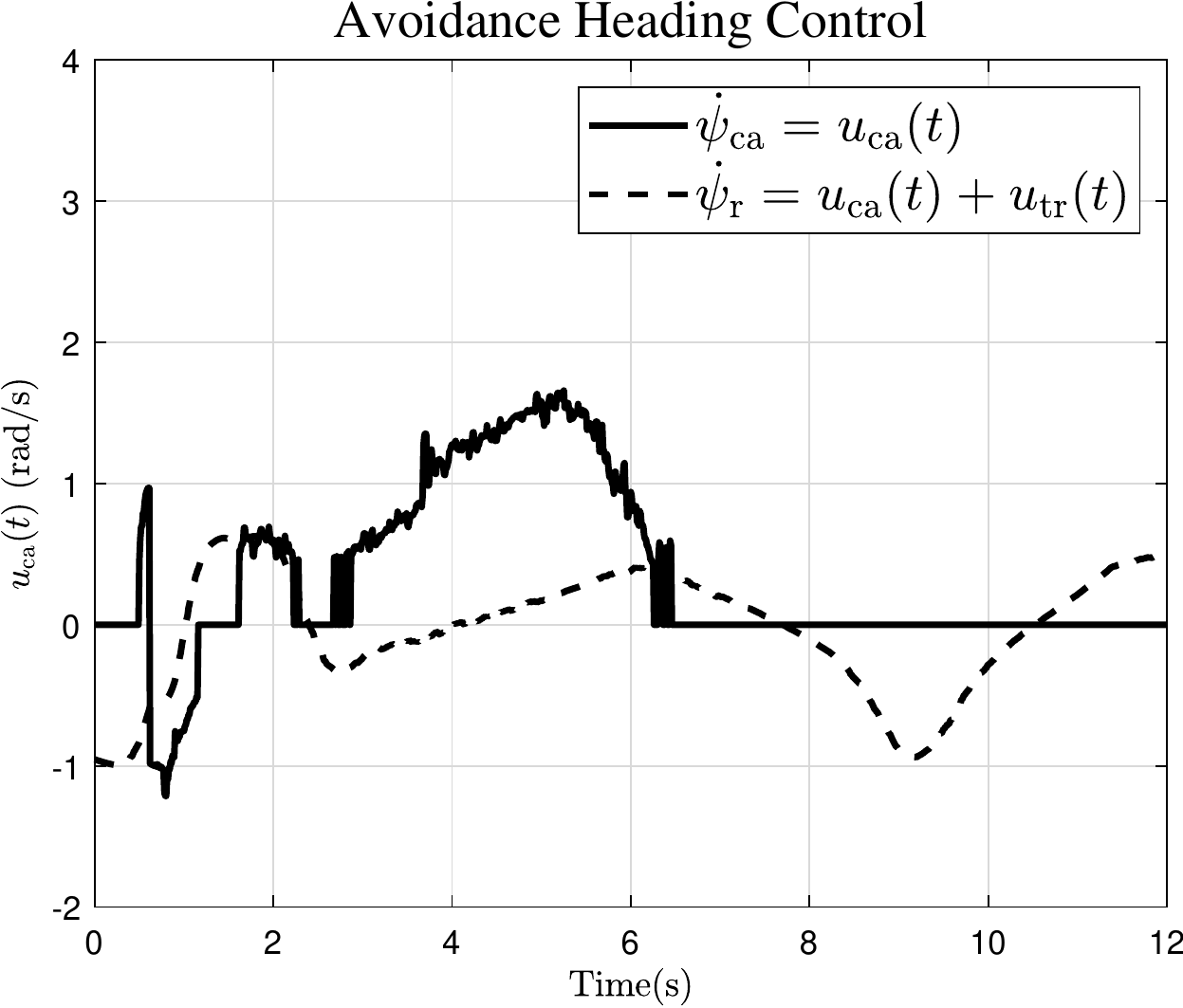}
    \label{fig:exp1_uca}}
	{\includegraphics[width=0.4\linewidth]{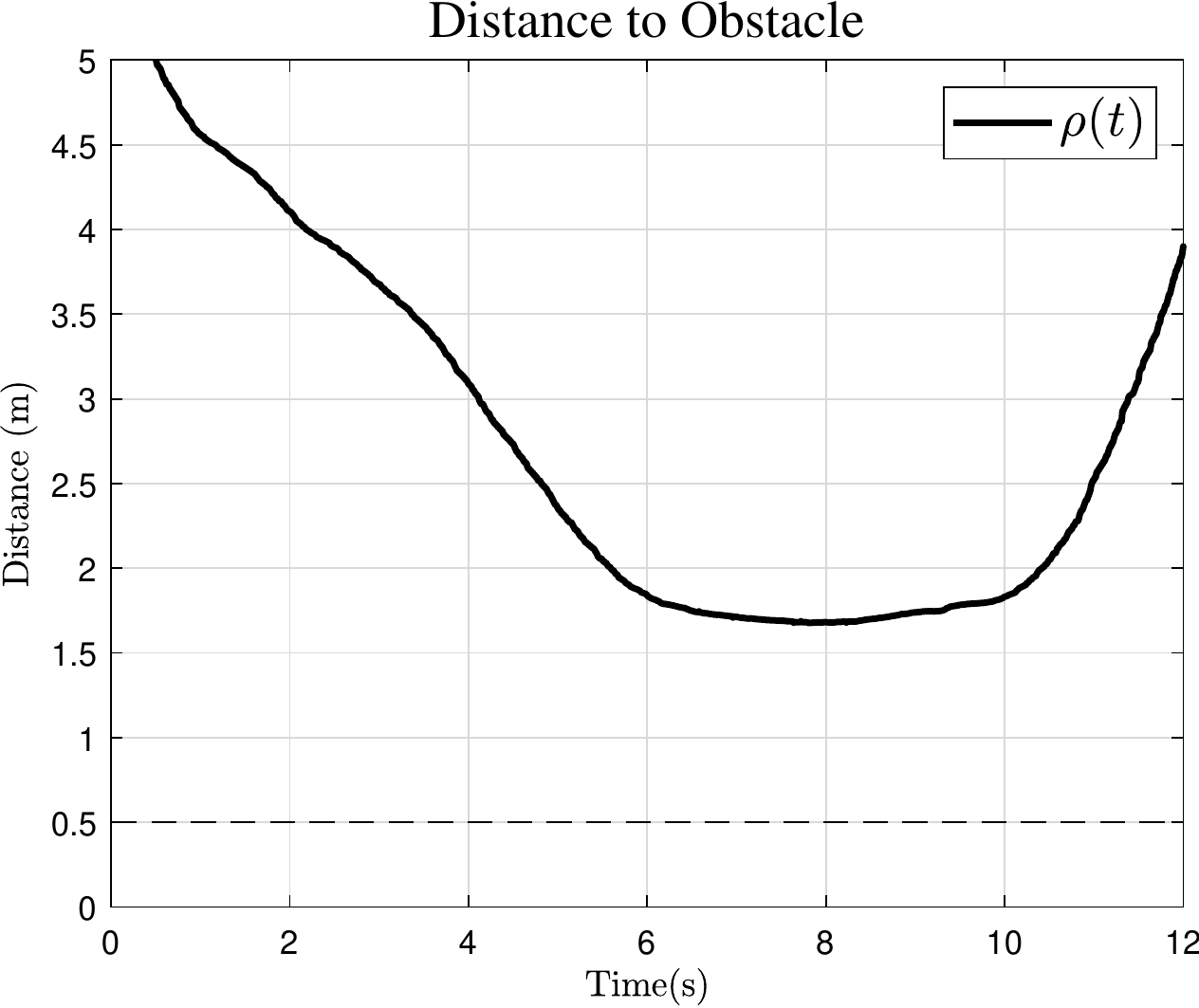}
    \label{fig:exp1_rho} }
 {\includegraphics[width=0.4\linewidth]{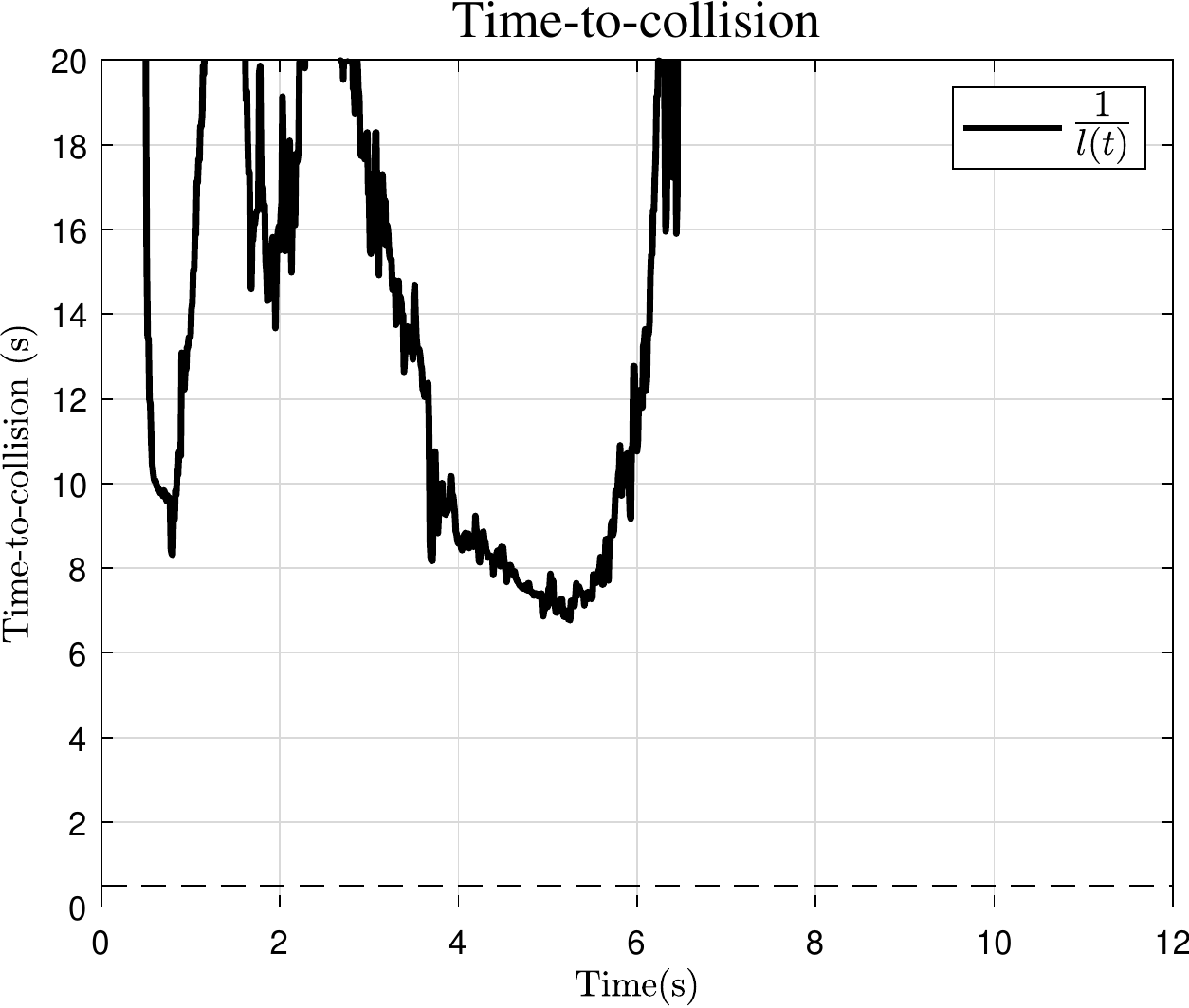}}
    \label{fig:exp1_ttc}}
	\caption{\small{Time history of signals.}} 
	\label{fig:exp1_s}
\end{figure}

\subsection{Scenario 2}

In this experiment an unknown object produces a zig-zag inducing two reactive evading maneuvers, the first of which lasts from $t=1$s until $t=4s$, and the second maneuver lasts from $t=6$ to $t=7.5$ seconds, as observed in the time-to-collision plot in Figure \ref{fig:exp4_s}. This illustrates the fact that the controller is able to engage and disengage the avoidance behavior without the knowledge of distance, overcoming the limitation of the algorithm proposed in \cite{cichella2015collision}. Figure \ref{fig:exp4} shows the successful collision avoidance, and the vehicle returning to the desired trajectory.

\begin{figure}[ht]
    \centering
    \includegraphics[width=0.6\linewidth]{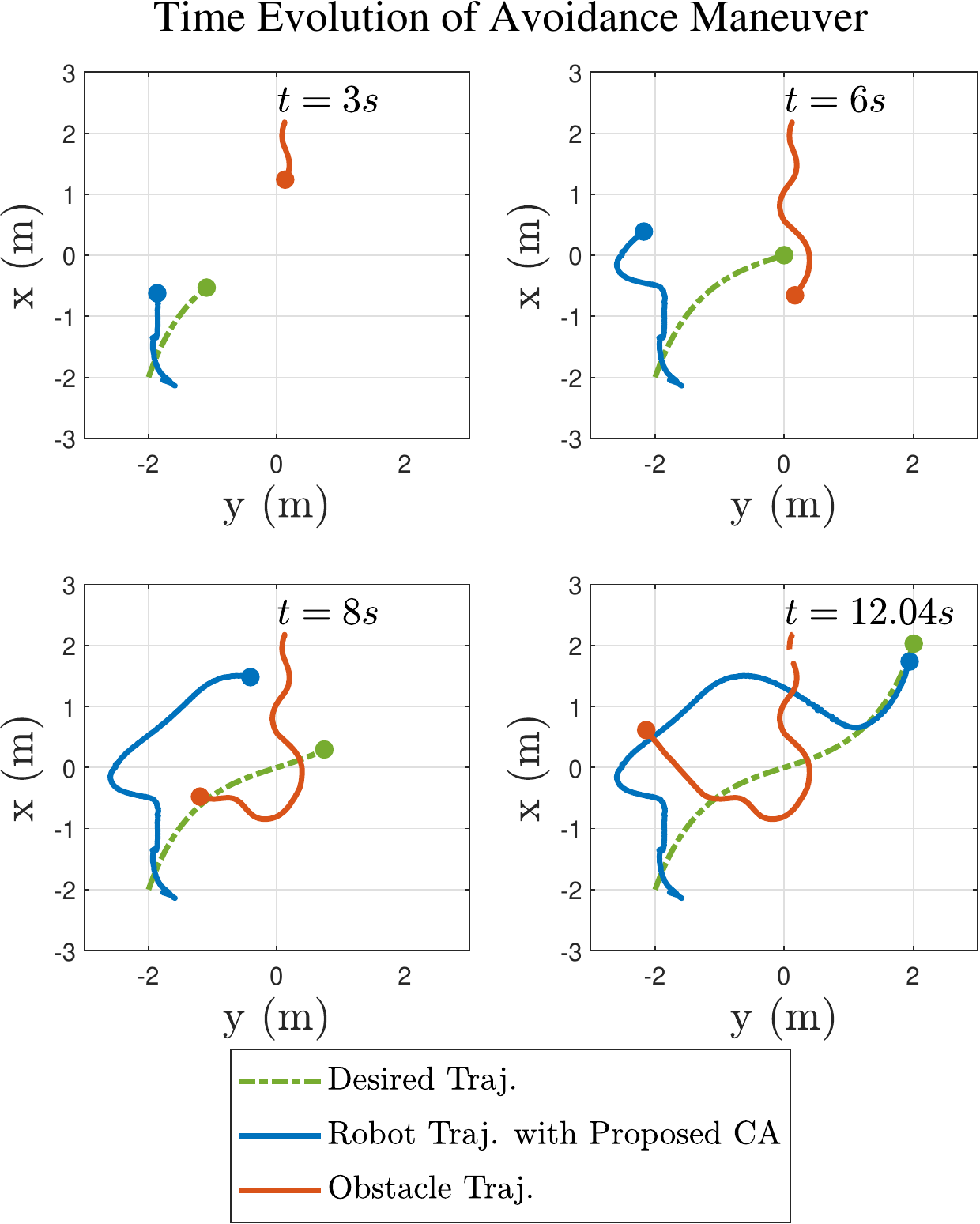}
       \caption{\small{Time evolution of robot's evading maneuver.}}
    \label{fig:exp4}
\end{figure}

\begin{figure}[ht]
	\centering
	\resizebox{.7\linewidth}{!}{
{\includegraphics[width=0.5\linewidth]{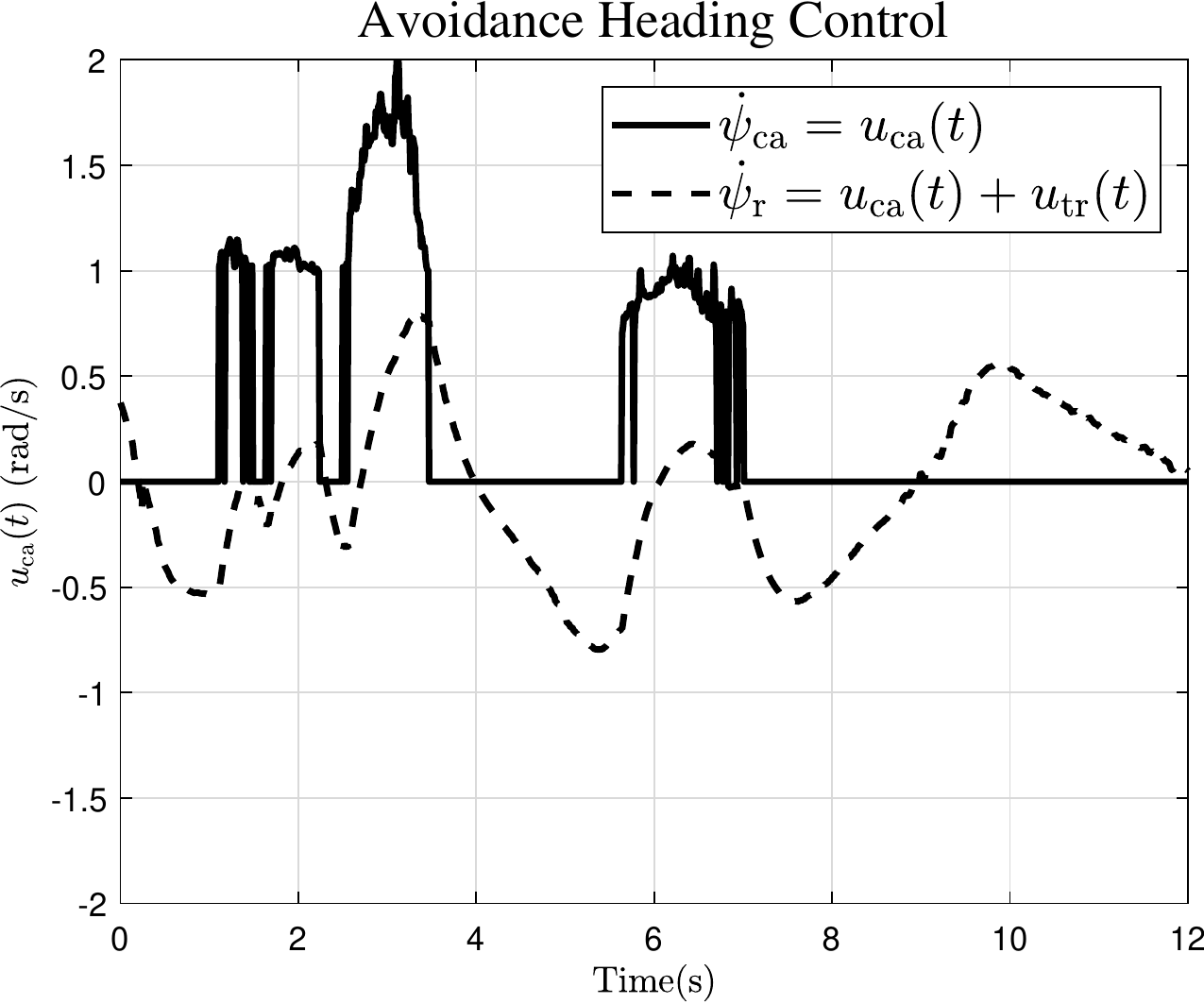}
    \label{fig:exp4_uca}}
{\includegraphics[width=0.5\linewidth]{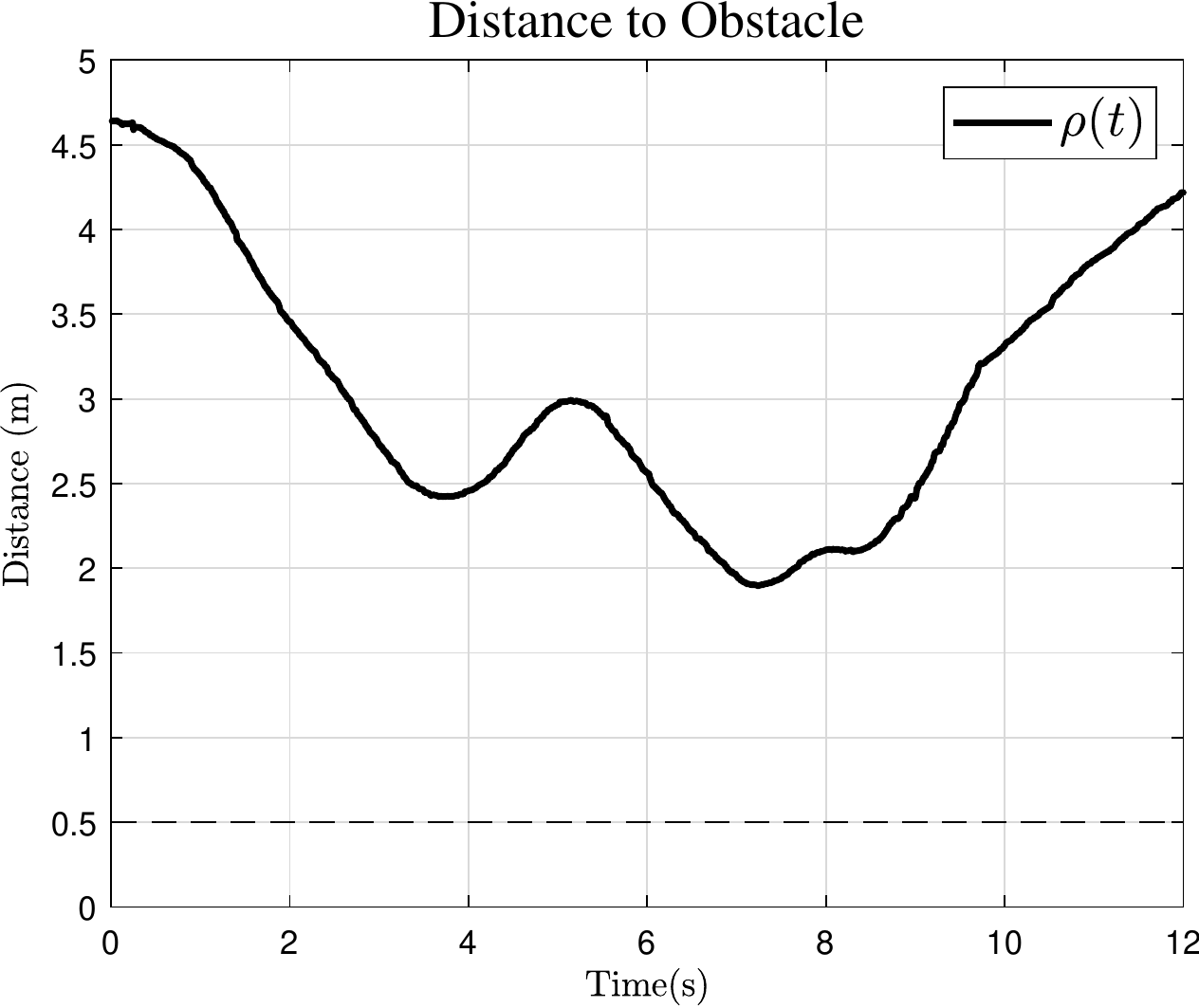}
    \label{fig:exp4_rho} }
 {\includegraphics[width=0.5\linewidth]{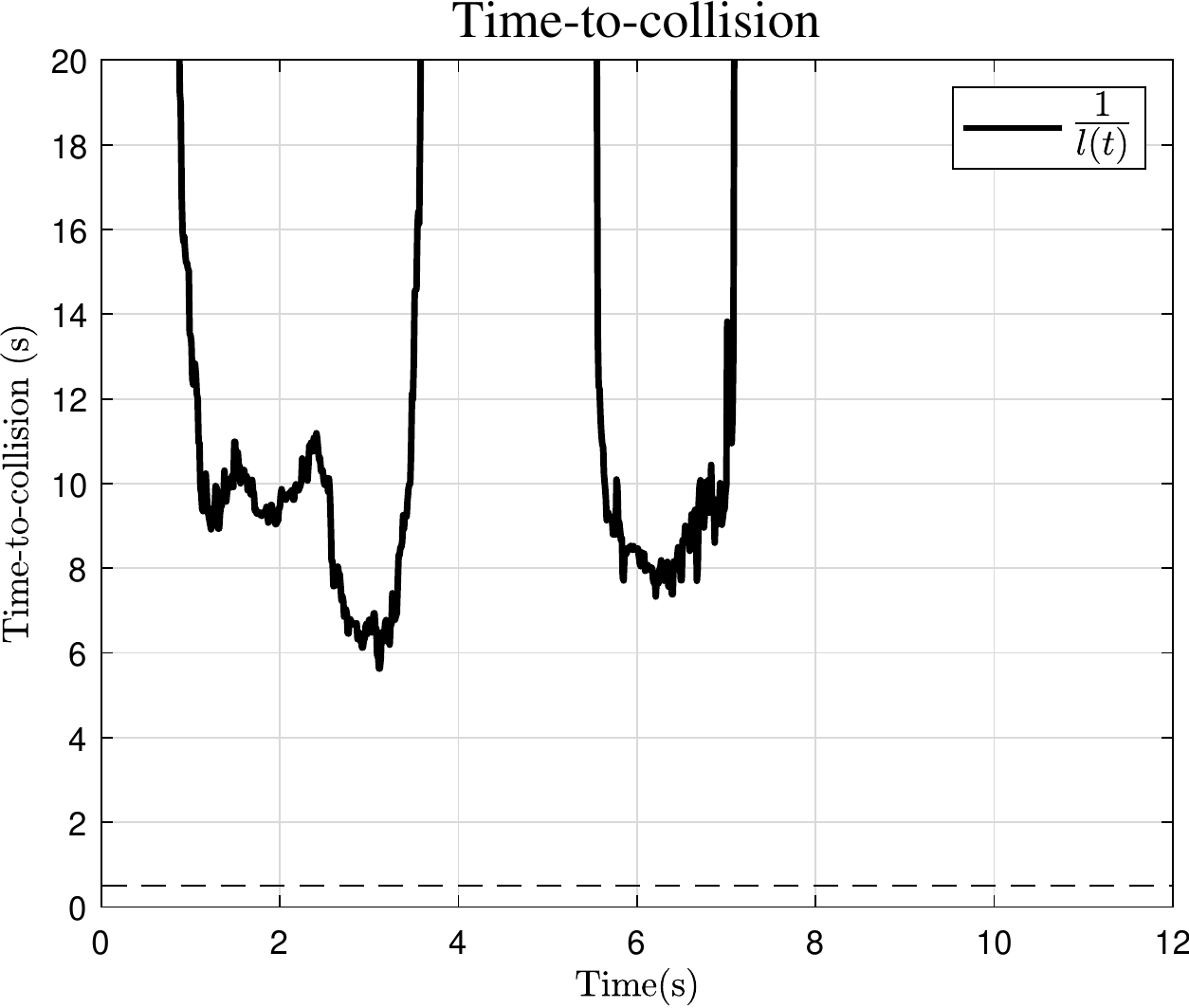} }
    \label{fig:exp4_ttc}}
	\caption{\small{Time history of signals.}} 
	\label{fig:exp4_s}
\end{figure}
\section{Conclusion}
In this paper, we addressed the issue of avoiding collision with an unknown, uncooperative pop-up obstacle with limited sensing capabilities.  The key feature of the proposed algorithms is that it does not require measurement of the distance to the obstacle. We introduced a collision avoidance algorithm that also guarantees a minimum time-to-collision. Overall, the control strategies developed in this work are designed to work along with a nominal tracking controller. Future extension of this work is to develop controllers and analysis for different vehicles, like car-like kinematics, that are  suitable for self-driving cars. In addition, the avoidance control law will be derived at the dynamics level, where bounds on control rates can be derived that are desirable in a real-world application.

\section{Acknowledgments}
This work is supported by Air Force Office of Scientific Research, NASA Langley Research Center, the National Science Foundation NRI grants \#1830639 and \#2019-04791 (project accession no. 102028 from the USDA National Institute of Food and Agriculture).
\bibliography{thesisrefs}
\bibliographystyle{ieeetr}

\end{document}